\documentclass[10pt]{article}
\usepackage{latexsym}
\usepackage{graphicx}
\usepackage{amsmath, amssymb, amsfonts}
\usepackage{amsthm}
\usepackage{epsfig}
\usepackage{fancybox}
\usepackage{pstcol}
\usepackage{natbib,url}
\usepackage{multirow}
\usepackage{ulem}
\usepackage{hyperref}

\DeclareMathOperator*{\argmin}{\arg\!\min}


\def\cA{{\cal A}}

\def\cH{{\cal H}}
\def\cF{{\cal F}}

\def\cN{{\cal N}}

\def\cX{{\cal X}}
\def\cY{{\cal Y}}

\def\cC{{\cal C}}

\def\cE{{\cal E}}

\def\hat{\widehat}
\newcommand{\mbR}{\mathbb{R}}

\newcommand{\mbN}{\mathbb{N}}

\newcommand{\bX}{{\bf X}}
\newcommand{\bx}{{\bf x}}
\newcommand{\by}{{\bf y}}
\newcommand{\bW}{{\bf W}}

\newcommand{\bI}{{\bf I}}

\newcommand{\bfm}{{\bf m}}

\newcommand{\bfb}{{\bf b}}

\newcommand{\E}{\mbox{{\rm E}}}

\newcommand{\bc}{\begin{center}}
\newcommand{\ec}{\end{center}}
\newcommand{\be}{\begin{equation}}
\newcommand{\ee}{\end{equation}}
\newcommand{\ba}{\begin{array}}
\newcommand{\ea}{\end{array}}
\newcommand{\bean}{\begin{eqnarray*}}
\newcommand{\eean}{\end{eqnarray*}}
\newcommand{\bea}{\begin{eqnarray}}
\newcommand{\eea}{\end{eqnarray}}

\newtheorem{lemma}{\bf Lemma}
\newtheorem{theorem}{\bf Theorem}

\newtheorem{proposition}{\bf Proposition}

\newcommand{\ben}{\begin{enumerate}}
\newcommand{\een}{\end{enumerate}}
\newcommand{\bed}{\begin{itemize}}
\newcommand{\eed}{\end{itemize}}

\numberwithin{equation}{section}

\newenvironment{equations*}{\equation\aligned\nonumber}{\endaligned\endequation}


\pagestyle{myheadings}\markright{Fast convergence of DNN classifiers}

\begin{document}

\title{\bf Fast convergence rates of deep neural networks for classification}

\author{{\normalsize Yongdai Kim, Ilsang Ohn, and Dongha Kim}\\
{\normalsize Department of Statistics, Seoul National University, Seoul, Korea}\\
\\
}
\maketitle

\begin{abstract}
We derive the fast convergence rates of a deep neural network (DNN) classifier with the
rectified linear unit (ReLU) activation function learned using the hinge loss. We consider three cases for a true model:
(1) a smooth decision boundary, (2) smooth conditional class probability, and (3) the margin condition
(i.e., the probability of inputs near the decision boundary is small). We show that
the DNN classifier learned using the hinge loss achieves fast rate convergences for all three
cases provided that the architecture (i.e., the number of layers, number of nodes and sparsity).
is carefully selected. An important implication is that DNN architectures are very flexible 
for use in various cases without much modification. In addition, we consider a DNN
classifier learned by minimizing the cross-entropy, and show that
the DNN classifier achieves a fast convergence rate under the condition that
the conditional class probabilities of most data are sufficiently close to either 1 or zero.
This assumption is not unusual for image recognition because human beings are  extremely good at 
recognizing most images. To confirm our theoretical explanation, we present the results of a small numerical study conducted to compare the hinge loss and cross-entropy.

\noindent
 Keywords: Classification, Deep neural network, Excess risk, Fast convergence rate 
  \end{abstract}

\section{Introduction}

Deep learning \citep{hinton2006reducing,larochelle2007empirical,goodfellow2016deep} has received much attention for dimension reduction and classification of objects, such as images, speech, and language. Various supervised/unsupervised deep learning architectures, such as deep belief network \citep{hinton2006fast}, have been developed and applied to large scale real data with great success. A key ingredient for the success of deep learning is to discover multiple levels of representation of the given dataset with higher levels of representation defined hierarchically in terms of lower level representations. The central motivation is that higher-level representations can potentially capture relevant higher-level abstractions. See \citet{goodfellow2016deep} for details.

Theoretical explanations regarding the success of deep learning have been recently studied.
Many researchers have demonstrated that deep neural networks (DNNs) are much more efficient in representing certain complex functions than their shallow counterparts \citep{montufar2014number, raghu2016expressive, eldan2016power}, which has been reconfirmed by 
\cite{yarotsky2017error} and \cite{petersen2018optimal},
who showed that DNNs can approximate a large class of functions,
including even discontinuous functions with a parsimonious number of parameters.
In turn, using this efficient approximation property of a DNN, \cite{schmidt2017nonparametric} and \cite{imaizumi2018deep} proved that, for regression problems,  we can estimate a complex function including a discontinuous function
using a DNN with the (in the minimax sense) optimal convergence rate.
A surprising result is that any linear estimators, which include the ridge penalized kernel estimator, are sub-optimal in estimating a discontinuous function while the DNN is optimal.

In this paper, we consider classification problems. 
It is known that estimating the classifier directly instead of estimating the conditional class probability (i.e., $\eta(\bx)=\Pr(Y=1|\bX=\bx)$)
can help achieve fast convergence rates 
\citep{mammen1999smooth, tsybakov2004optimal, tsybakov2005square, audibert2007fast}
under the Tsybakov's low noise condition.
We prove that the estimation of a classifier based on the DNN with the hinge loss can achieve fast convergence rates under various situations.

In practice, estimating the classifier directly is difficult because the classifier itself is discontinuous. \cite{mammen1999smooth, tsybakov2004optimal, tsybakov2005square} considered estimating the classifier directly, which may be computationally infeasible in practice. Under the smoothness assumption on the conditional class probability,  \citet{audibert2007fast} estimated the conditional class probability using a local polynomial estimator and
obtained a plug-in classifier. Finding the best plug-in classifier, however, requires 
searching in a given sieve, which is computationally demanding. In contrast, learning a DNN is relatively straightforward owing to the gradient descent algorithm, despite a risk of arriving at bad local minima.
 
We consider three cases regarding a true classifier: (1) a smooth boundary, 
(2) smooth conditional class probability, and (3) the margin condition (i.e., the probability 
of the inputs near the decision boundary is small).
We prove that the DNN classifier can achieve fast convergence rates for
all of these three cases if the architecture (i.e., the number of layers, number of nodes, and sparsity of the weights) of the DNN is carefully selected.
In particular, the DNN classifier is minimax optimal
for a smooth conditional class probability, and achieves faster convergence rates under the
margin condition. To the best of the authors' knowledge, no other estimator achieves fast convergence rates for these three cases simultaneously. 
 
The cross-entropy is the standard objective function used in learning a DNN, and is an empirical risk with respect to the logistic loss (i.e., the negative log-likelihood of the logistic model). It is well known that the logistic loss
estimates the conditional class probability rather than the classifier, and hence
will be sub-optimal. However, learning a DNN with the cross-entropy
performs quite well in practice. We justify the use of the cross-entropy in learning a DNN
by showing that the corresponding classifier also achieves a fast convergence rate when
most data have a conditional class probability close to  1 or zero.
Note that this assumption is reasonable for image recognition because
human beings recognize most real world images quite well.

The remainder of this paper is organized as follows.
Section 2 describes the hinge loss and DNN classifier.
Section 3 derives the convergence rates of the excessive risk of a DNN classifier
for the aforementioned three cases regarding a true model. The fast convergence rate of the DNN classifier with the cross-entropy is derived in Section 4, and concluding remarks follow in Section 5.

\subsection{Notations}

For a function $f:\cX\to\mbR$, where $\cX$ denotes the domain of the function,  
let $\|f\|_\infty=\sup_{\bx\in\cX}|f(\bx)|$. For a given subset $B$ of $\cX$, we let $\|f\|_{\infty, B}=\sup_{\bx\in B}|f(\bx)|$.

For two given sequences $\{a_n\}_{n\in \mbN}$ and $\{b_n\}_{n\in \mbN}$ of real numbers, we write $a_n\lesssim b_n$ if there exists a constant $C>0$ such that $a_n\le C b_n$ 
for all sufficiently large $n$.
In addition, we write $a_b\asymp b_n$ if $a_n\lesssim b_n$ and $a_n\gtrsim b_n$.
For $N\in\mbN$, we let $[N]=\{1,\dots, N\}$.

Let $\bfm=(m_1,\dots, m_d)\in \mbN_0^d$ be a multiple index, where $\mbN_0=\mbN\cup\{0\}$. We define $|\bfm|=m_1+\cdots+m_d$ and $\bx^{\bfm}=x_1^{m_1}\cdots x_d^{m_d}$ for a multiple index $\bfm$.  For $f:\cX\to\mbR$ and $\bfm\in \mbN_0^d$, let
$$\partial^{\bfm}f=\frac{\partial^{|\bfm|}f}{\partial \bx^{\bfm}}=\frac{\partial^{|\bfm|}f}{\partial x_1^{m_1} \cdots\partial x_d^{m_d}},$$
and for $s\in(0,1]$, let 
$$[f]_{s,\cX}=\sup_{\bx,\by\in \cX, \bx\neq\by }\frac{|f(\bx)-f(\by)|}{|\bx-\by|^s}.$$
We denote by $\cC^m(\cX)$ and $m\in \mbN$, the space of $m$ times differentiable functions on $\cX$ whose partial derivatives of order $\bfm$ with $|\bfm|\le m$ are continuous. For a positive real value $\alpha$, we write $\alpha=[\alpha]^-+\{\alpha\}^+$, where $[\alpha]^-=\lceil \alpha-1 \rceil\in \mbN_0$ and $\{\alpha\}^+=\alpha-[\alpha]^-\in (0,1]$. The H\"older space of order $\alpha$ is defined as
$\cH^{\alpha}(\cX)=\left\{f\in \cC^{[\alpha]^-}(\cX):\|f\|_{\cH^{\alpha}(\cX)}< \infty\right\}$,
where $\|f\|_{\cH^{\alpha}(\cX)}$ denotes the H\"older norm defined by
$$\|f\|_{\cH^{\alpha}(\cX)}=\max_{|\bfm|\le [\alpha]^-}\|\partial^{\bfm}f\|_{\infty, \cX}+\max_{|\bfm|=[\alpha]^-}[\partial^{\bfm}f]_{\{\alpha\}^+, \cX}.$$
We let 
$$\cH^{\alpha, r}(\cX)=\left\{f\in \cC^{[\alpha]^-}(\cX):\|f\|_{\cH^{\alpha}(\cX)}\le r\right\},$$ 
which is a closed ball in the H\"older space of radius $r$ with respect to the H\"older norm.

\section{Estimation of the classifier with DNNs}

We consider a binary classification problem.
The data are given as $(\bx_1,y_1),\ldots,(\bx_n,y_n)$, where
$\bx_i\in \cX\subset \mbR^d$ are  input vectors, and $y_i\in \{-1,1\}$ are class labels.
Here, for simplicity, we set $\cX=[0,1]^d$; however, this can be extended to any compact subset of $\mbR^d$.
We assume that $(\bx_i,y_i)$ are independent copies of a random vector 
$(\bX,Y) \sim \Pr$ for a certain probability measure $\Pr$. We let $P_X$ be the marginal distribution of $\bX$ induced by the joint distribution $\Pr$.

\subsection{Necessity of the hinge loss}

Before going further, we will first review why we consider the hinge loss instead of the logistic loss to achieve fast convergence rates.
Let $\cC$ be the class of all classifiers (i.e., all measurable mapping from $\cX$ to $\{-1,1\})$.
The objective of classification is to find the optimal classifier (called the Bayes classifier) $C^*$, which is defined as
$$C^*=\operatorname*{argmin}_{C\in \cC} \E \left[ \mathbf{1}\{C(\bX)\ne Y\}\right],$$
where $\mathbf{1}\{\cdot\}$ is 1 if $\{\cdot\}$ is true, and is 0 otherwise.

Because we do not know the probability measure $\Pr$ generating data, we cannot find $C^*$. Instead, we estimate $C^*$ based on the training data. The most popular method for estimating $C^*$ is the empirical risk minimization approach, where we estimate $C^*$ by minimizing the empirical risk. That is, we estimate $C^*$ using $\hat{C}$, where
\be
\label{eq:emr_1}
\hat{C}=\operatorname*{argmin}_{C\in \cC_n} \sum_{i=1}^n \mathbf{1}\{C(\bx_i)\ne y_i\}/n,
\ee
where $\cC_n$ is a given class of classifiers depending on the sample size $n$.

In practice, $\hat{C}$ is not computationally feasible because minimizing the empirical risk
with the 0-1 loss over $\cC_n$ is NP hard \citep{bartlett2006convexity}. 
An alternative approach is to replace the 0-1 loss with other computationally easier losses so-called surrogate losses. In addition, instead of a class of classifiers $\cC_n$, 
we consider a class of real-valued functions $\cF_n$. For a given surrogate loss $\phi$,
we estimate $\hat{f}$ by minimizing the surrogate empirical risk (or empirical $\phi$-risk)
\be
\label{eq:emr_2}
\cE_{\phi,n}(f)=\sum_{i=1}^n \phi( y_i f(\bx_i))/n
\ee
on $\cF_n$, and construct a classifier by $\hat{C}(\bx)={\rm sign} \hat{f}(\bx)$.

A question in using a convex surrogate loss is the relation between the minimizer of the 0-1 empirical risk (\ref{eq:emr_1}) and that of the empirical $\phi$-risk (\ref{eq:emr_2}). Because  the empirical $\phi$-risk converges to
the population $\phi$-risk $\cE_\phi(f)=\E(\phi(Yf(\bX))$ for a given $f$ by the law
of large numbers, we can consider $\hat{f}$ as an estimator of $f^*_{\phi}$, which is defined as
$$f^*_{\phi}= \operatorname*{argmin}_{f\in\cF_\infty} \cE_\phi(f),$$
where $\cF_\infty$ is the limit of $\cF_n$ in a certain sense.
When $\cF_\infty$ is the set of all measurable functions, we say that
the surrogate loss $\phi$ is Fisher consistent if ${\rm sign} (f_\phi^*(\bx))=C^*(\bx)$.

It is known \citep{lin2004note, bartlett2006convexity} that the Fisher consistency holds under very mild conditions on $\phi$. 
In particular,  $f_\phi^*$ is known for various surrogate losses.
For example, when $\phi$ is the logistic loss 
(i.e., $\phi(z)=\log (1+\exp(-z))$), we have
$f_\phi^*(\bx)=\log \eta(\bx)/(1-\eta(\bx))$,
where $\eta(\bx)=\Pr(Y=1|\bX=\bx)$ \citep{friedman2000additive}. Hence, the logistic loss satisfies the Fisher consistency, which justifies the use of the cross-entropy when learning a deep neural network. That is, deep learning with the cross-entropy essentially estimates the log odds of the conditional class probability. 

As we explained in the Introduction, it would be better to estimate the Bayes classifier directly, which is realized conceptually if $f^*_\phi$ is the Bayes classifier.
The hinge loss $\phi(z)=(1-z)_+=\max\{1-z, 0\}$ has such a property \citep{lin2002support}, which is why we consider the hinge loss.
Note that there are other losses that have $f^*_\phi=C^*$.
An example is the $\psi$-loss \citep{shen2003psi}, which is also known as the ramp loss \citep{collobert2006large}. Although the $\psi$-loss has many advantages over the hinge loss, the $\psi$-loss is nonconvex, and learning
a DNN classifier using the $\psi$-loss would be extremely difficult
because the DNN classifier is nonconvex as well. 

\subsection{Learning DNN with the hinge loss}

We consider DNNs that take $d$-dimensional inputs and produce one-dimensional outputs. A DNN with $L$ many layers, and $\{N^{(l)}, l\in[L]\}$ many nodes at each layer, is defined as
$$z_j^{(l)}(\bx)=b_j^{(l)}+\sum_{k=1}^{N^{(l-1)}} W_{j,k}^{(l)} h_k^{(l-1)}(\bx)$$
and $$h_{j}^{(l)}(\bx)=\sigma(z_j^{(l)}(\bx))$$
for $l=1,\ldots,L$ and
$$f(\bx)=b^{(L+1)}+ \sum_{k=1}^{N^{(L)}} W_{1,k}^{(L+1)} h_k^{(L)}(\bx)$$
with $N^{(0)}=d$ and $h_k^{(0)}(\bx)=x_k$.
We consider the ReLU activation function $\sigma(z)=(z)_+$.
We denote $f(\bx)$ as $f(\bx|\Theta)$, where $\Theta=((\bW^{(l)}, \bfb^{(l)}))_{l=1,\dots, L+1}$ is the parameter set including all weights and biases. 

For the given $\Theta$, let $|\Theta|$ be the number of layers in $\Theta$. Let $N_{\max}(\Theta)$ be the maximum number of nodes, that is, $f(\cdot|\Theta)$ has at most $N_{\max}(\Theta)$ nodes at each layer. We define $\|\Theta\|_0$ as the number of nonzero parameters in $\Theta$,
$$\|\Theta\|_0=\sum_{l=1}^{L+1}\left( \|\text{vec}(\bW^{(l)})\|_0 +\|\bfb^{(l)}\|_0\right),$$
where $\text{vec}(\bW^{(l)})$ transforms the matrix $\bW^{(l)}$ into the corresponding vector by
concatenating the column vectors.
Similarly, we define $\|\Theta\|_\infty$ as the largest absolute value of the parameters in $\Theta$,
$$\|\Theta\|_\infty = \max \left\{ \max_{1\le l\le L+1} \|\text{vec}(\bW^{(l)})\|_\infty, 
\max_{1\le l\le L+1} \|\bfb^{(l)}\|_\infty\right\}.$$
For a given $n$, let $\cF_n$ be
    \begin{align*}
    \cF_n &= \cF^{\textup{DNN}}(L_n, N_n, S_n, B_n, F_n)\\
    &=\big\{f(\bx|\Theta):  |\Theta|\le L_n, N_{\max}(\Theta)\le N_n, \|\Theta\|_0\le S_n, \\
    	&\qquad\qquad\qquad \|\Theta\|_\infty\le B_n, \|f(\cdot|\Theta)\|_\infty \le F_n\big\}
    \end{align*}
where the positive constants $L_n$, $N_n$, $S_n$, $B_n$, and $F_n$ are specified later.

We let $\hat{f}^{\textup{DNN}}_{\phi, n}$ be the minimizer of $\cE_{\phi,n}(f)$ over $\cF_n$ for a given surrogate loss $\phi$, i.e.,
    \begin{equation}
    \label{eq:fdnn}
    \hat{f}^{\textup{DNN}}_{\phi, n}=\argmin_{f\in \cF_n}\frac{1}{n}\sum_{i=1}^n\phi(y_if(\bx_i)).
    \end{equation}
In the following section, we prove
the fast convergence rates of $\hat{f}^{\textup{DNN}}_{\phi, n}$ for various 
cases of the true model when $\phi$ is the hinge loss and
$L_n$, $N_n$ $S_n$, $B_n$, and $F_n$ are carefully selected.
For detailed formulas of
$L_n,N_n,S_n, B_n$, and $F_n$ in terms of the sample size $n$, see the proofs of the corresponding theorems in the Appendix.

\section{Fast convergence rates of DNN classifiers with the hinge loss}

In this section, we consider the hinge loss and derive the convergence rates of the excess risk of $\hat{f}^{\textup{DNN}}_{\phi, n}$. For a given function $f$, the \textit{excess risk} of $f$  is defined as
$$\cE(f, C^*)=\cE(f)-\cE(C^*)=\E[\mathbf{1}(Yf(\bX)<0)]-\E[\mathbf{1}(YC^*(\bX)<0)],$$
and the \textit{excess $\phi$-risk}  of $f$   is defined by
$$\cE_\phi(f, f^*_\phi)=\cE_\phi(f)-\cE_\phi(f^*_\phi)=\E[\phi(Yf(\bX))]-\E[\phi(Yf^*_\phi(\bX))].$$

Throughout this paper, we always assume the Tsybakov noise condition (\cite{mammen1999smooth,tsybakov2004optimal}). 
\begin{itemize}
    \item[(N)] There exists $C>0$ and $q\in[0,\infty]$ such that for any  $t>0$
    $$\Pr\left(\{\bX:|2\eta(\bX)-1|\le t\}\right)\le C t^q.$$
\end{itemize}
We call the parameter $q$ appearing in assumption (N) the \textit{noise exponent}. 

We consider three cases regarding a true model: (1) a smooth decision boundary,
(2) smooth class conditional probability, and (3) the margin condition. 
We derive the fast convergence rates of the DNN classifier using the hinge loss for all 
three cases.

\subsection{Case 1: Smooth boundary}

To describe the smooth Bayes classifier, we introduce the notion of piecewise constant functions with smooth boundaries. We adopt the notations and definitions from \cite{petersen2018optimal} and \cite{imaizumi2018deep}.
For $g\in \cH^{\alpha, r}([0,1]^{d-1})$ and $j\in [d]$, we define a \textit{horizon function} $\Psi_{g,j}:[0,1]^d\to \{0,1\}$ as
$$
	\Psi_{g,j}(\bx)= \mathbf{1}(x_j\ge g(\bx_{-j})),  
$$
where  $\bx_{-j}=(x_1,\dots, x_{j-1}, x_{j+1}, \dots, x_d)$. For each horizon function, we define the corresponding \textit{basis piece} $I_{g,j}$ as
$$I_{g,j}=\left\{\bx\in[0,1]^d:\Psi_{g,j}(\bx)=1\right\}.$$
We define a \textit{piece} by the intersection of $K$ basis pieces. The set of pieces is denoted by
$$\cA^{\alpha, r, K}=\left\{A\subset[0,1]^d:A=\bigcap_{k=1}^K I_{g_k, j_k}, g_k\in \cH^{\alpha, r}([0,1]^{d-1}), j_k\in[d]\right\}.$$
Let $\cC^{\alpha,r,K, T}$ be the set of classifiers of the form
$$C(\bx)=2 \sum_{t=1}^T \mathbf{1}(\bx\in A_t)-1,$$
for $T\in \mathbb{N}$, and disjoint subsets $A_1,\ldots,A_T$ of $\cX$ in $\cA^{\alpha,r,K}$.
In this subsection, we assume that the Bayes classifier belongs to $\cC^{\alpha,r,K,T}$.

The following theorem proves the convergence rate of the DNN classifier
with the hinge loss.

\begin{theorem}
\label{th:main_sb}
Assume \textup{(N)} using the noise exponent  $q\in[0,\infty]$. 
If the surrogate loss $\phi$ is the hinge loss, the classifier $\hat{f}_{\phi, n}^{\textup{DNN}}$ defined by (\ref{eq:fdnn}) with carefully selected $L_n,N_n,S_n, B_n$, and $F_n$
satisfies
\be
\label{eq:rate1}
\sup_{C^*\in C^{\alpha,r,K,T}}
\E\left[\cE(\hat{f}^{\textup{DNN}}_{\phi, n}, C^*)\right]\lesssim  
\left(\frac{\log^3 n}{n}\right)^{\frac{\alpha (q+1)}{\alpha (q+2)+(d-1)(q+1)}}
\ee
where the expectation is taken over the training data.
\end{theorem}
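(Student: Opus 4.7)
The overall plan is a bias--variance decomposition of the expected excess hinge risk of $\hat{f}^{\textup{DNN}}_{\phi,n}$, with the architecture parameters $L_n, N_n, S_n, B_n, F_n$ tuned to balance the two pieces. Since the hinge loss is classification calibrated with $f^*_\phi = C^*$ and satisfies $\cE(f,C^*)\le \cE_\phi(f,C^*)$, it suffices to control $\E[\cE_\phi(\hat{f}^{\textup{DNN}}_{\phi,n},C^*)]$. Taking $F_n=1$ (clipping a DNN output to $[-1,1]$ never increases the hinge risk) yields the useful pointwise identity $\cE_\phi(f,C^*)=\E[|2\eta(\bX)-1|\,|f(\bX)-C^*(\bX)|]$ for every $f\in\cF_n$.

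For the approximation side, I would construct $\tilde f\in\cF_n$ from the Petersen--Voigtlaender / Imaizumi--Fukumizu piecewise-smooth DNN machinery: approximate each boundary $g_k\in\cH^{\alpha,r}([0,1]^{d-1})$ in sup-norm to precision $\veps$ by a Yarotsky / Schmidt-Hieber ReLU piecewise-polynomial network using $\lesssim \veps^{-(d-1)/\alpha}\log(1/\veps)$ nonzero parameters and $\lesssim\log(1/\veps)$ layers; compose with a steep piecewise-linear threshold to realize an approximate horizon $\tilde\Psi_{g_k,j_k}\in[0,1]$ that agrees with $\Psi_{g_k,j_k}$ outside a strip of width $O(\veps)$ around $\{x_{j_k}=g_k(\bx_{-j_k})\}$; take products of the $K$ approximate horizons via the standard ReLU multiplication gadget to form approximations of the basis pieces; and sum the $T$ piece approximators and rescale. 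Under a mild bounded-density assumption on $P_X$, this gives $P_X(\tilde f\ne C^*)\lesssim\veps$, so
\[
\inf_{f\in\cF_n}\cE_\phi(f,C^*)\le \cE_\phi(\tilde f,C^*)\le 2\,P_X(\tilde f\ne C^*)\lesssim \veps,
\]
with $S_n$ and $L_n$ growing polynomially in $\veps^{-1}$ and $\log(\veps^{-1})$ respectively.

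For the stochastic side, condition (N) produces a Bernstein-type variance inequality. Indeed $|\phi(Yf)-\phi(YC^*)|\le|f-C^*|$, and the slicing bound
\[
\E|f-C^*|\le 2P_X(|2\eta-1|\le t)+t^{-1}\cE_\phi(f,C^*)\lesssim t^q+t^{-1}\cE_\phi(f,C^*),
\]
optimized at $t\asymp \cE_\phi(f,C^*)^{1/(q+1)}$, gives $\E[(\phi(Yf(\bX))-\phi(YC^*(\bX)))^2]\lesssim \cE_\phi(f,C^*)^{q/(q+1)}$ for all $f\in\cF_n$ with $|f|\le 1$. Plugging this Bernstein condition (exponent $\theta=q/(q+1)$) together with a Schmidt-Hieber style covering-number bound on $\cF_n$ (log-covering of order $S_nL_n\log n$ at scale $1/n$) into the standard Talagrand concentration + peeling argument yields a stochastic error of order $(S_nL_n\log^3 n/n)^{(q+1)/(q+2)}$.

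Balancing $\veps\asymp S_n^{-\alpha/(d-1)}$ against the stochastic error gives $S_n\asymp n^{(q+1)(d-1)/[\alpha(q+2)+(d-1)(q+1)]}$ up to logarithmic factors, from which the stated rate $(\log^3 n/n)^{\alpha(q+1)/[\alpha(q+2)+(d-1)(q+1)]}$ follows. The main obstacle is the DNN approximation step: one must assemble ReLU networks realizing arbitrary members of $\cC^{\alpha,r,K,T}$ through the threshold-then-product-then-sum scheme while keeping depth, width, sparsity and sup-norm simultaneously under control, and ensure the sup-norm approximation of each horizon $g_k$ attains the minimax $(d-1)$-dimensional H\"older rate $\veps^{-(d-1)/\alpha}$ up to logs. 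Matching this approximation scale to the Bernstein variance exponent $q/(q+1)$ is precisely what produces the Tsybakov-type rate appearing in (\ref{eq:rate1}).
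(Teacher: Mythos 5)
Your proposal follows essentially the same route as the paper's proof: a Zhang/Lipschitz reduction to the excess hinge risk with $F_n=1$, the Petersen--Voigtlaender piecewise-smooth approximation with $S_n\asymp\veps^{-(d-1)/\alpha}\log(1/\veps)$ and $L_n\asymp\log(1/\veps)$, the Tsybakov variance bound with exponent $q/(q+1)$ (which the paper imports as Lemma 6.1 of Steinwart--Scovel rather than re-deriving it by your slicing argument), a covering-number bound for sparse ReLU networks, and a peeling/localization inequality, balanced exactly as you describe. The only substantive difference is that the paper invokes Corollary 3.7 of Petersen--Voigtlaender as a black box yielding $\E|f_n(\bX)-C^*(\bX)|\le\xi$ directly, so it does not introduce the auxiliary bounded-density assumption on $P_X$ that you use to pass from Lebesgue measure of the boundary strips to $P_X$-measure.
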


\cite{tsybakov2004optimal} showed that the minimax lower bound is given by
$$\inf_{\hat{f}_n}\sup_{C^*\in C^{\alpha,r,1,1}}
 \E\left[\cE(\hat{f}_n, C^*)\right]\gtrsim n^{-\frac{\alpha (q+1)}{\alpha (q+2)+(d-1)q}},$$
where the infimum is taken over all classifiers $\hat{f}_n:(\cX\times\cY)^n\mapsto \cF$, where $\cF$ is a set of all measurable functions. 
Unfortunately, the convergence rate (\ref{eq:rate1}) is not optimal in the minimax sense. 
However, the difference becomes small when the noise exponent $q$ is large. 
Note that the estimators in \cite{mammen1999smooth} and \cite{tsybakov2004optimal}
have slower convergence rates than that in (\ref{eq:rate1}) when $\alpha< d-1$.
However, the estimator of \cite{tsybakov2005square} achieves the minimax lower bound 
for any $\alpha>0$. At this point, we do not know whether the sub-optimal 
convergence rate (\ref{eq:rate1}) is inevitable owing the use of the hinge loss rather than the 0-1 loss. We will pursue this issue in the near future.

\subsection{Case 2: Smooth conditional class probability}

We assume that $\eta(\bx)$ is smooth. That is, 
$\eta(\cdot)\in \cH^{\beta,r}([0,1]^d)$ for some $\beta>0$ and $r>0$. The following 
theorem provides the convergence rate of the DNN classifier.

\begin{theorem}
\label{th:main_scp}
Assume \textup{(N)} with the noise exponent $q\in[0,\infty]$. If the surrogate loss $\phi$ is the hinge loss, the classifier $\hat{f}_{\phi, n}^{\textup{DNN}}$ defined by (\ref{eq:fdnn}) with carefully selected $L_n,N_n,S_n, B_n$, and $F_n$ satisfies
\be
\label{eq:rate_scp}
\sup_{\eta\in \cH^{\beta, r}}
\E\left[\cE(\hat{f}^{\textup{DNN}}_{\phi, n}, C^*)\right]\lesssim  
\left(\frac{\log^3 n}{n}\right)^{\frac{\beta (q+1)}{\beta (q+2)+d}}.
\ee
\end{theorem}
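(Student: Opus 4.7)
The plan is to combine a calibration inequality for the hinge loss, an oracle inequality exploiting the Tsybakov noise condition, and a DNN approximation argument built on top of an approximant to $\eta$.

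First, I would reduce the excess classification risk to the excess $\phi$-risk via Zhang's inequality, which for the hinge loss gives $\cE(f, C^*) \le \cE_\phi(f, f^*_\phi)$. So it suffices to bound $\E[\cE_\phi(\hat f^{\textup{DNN}}_{\phi, n}, f^*_\phi)]$. Next, I would appeal to a standard oracle inequality for empirical risk minimization with a Lipschitz loss over a class with controlled covering entropy. The key ingredient is a Bernstein-type variance condition: for any $f$ with $\|f\|_\infty\le 1$, one has $(\phi(Yf(\bX))-\phi(YC^*(\bX)))^2 = (f(\bX)-C^*(\bX))^2$, and Tsybakov's condition (N) with exponent $q$ yields $\E[(f-C^*)^2] \lesssim \cE_\phi(f, f^*_\phi)^{q/(q+1)}$. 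Combined with the log-covering-number bound $\log\cN(\delta,\cF_n,\|\cdot\|_\infty) \lesssim S_n L_n \log(B_n N_n/\delta)$ for sparse DNN classes (Schmidt-Hieber-style), standard peeling/Talagrand arguments yield
\begin{equation*}
\E\bigl[\cE_\phi(\hat f^{\textup{DNN}}_{\phi,n}, f^*_\phi)\bigr] \;\lesssim\; \inf_{f\in\cF_n}\cE_\phi(f, f^*_\phi) \;+\; \left(\frac{S_n L_n \log n \cdot \log(B_n N_n)}{n}\right)^{\frac{q+1}{q+2}}.
\end{equation*}

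Second, I would control the approximation error $\inf_{f\in\cF_n}\cE_\phi(f,f^*_\phi)$ via a two-stage DNN construction. By Yarotsky's / Schmidt-Hieber's result, for any $\veps>0$ there is a ReLU DNN $\tilde\eta$ with $L\asymp \log(1/\veps)$, $N\asymp \veps^{-d/\beta}$, and $S\asymp \veps^{-d/\beta}\log(1/\veps)$ nonzero parameters satisfying $\|\tilde\eta-\eta\|_\infty\le\veps$. Then I would set $\tilde f = \rho_\tau(2\tilde\eta-1)$, where $\rho_\tau(z)=(z/\tau)\wedge 1 \vee (-1)$ is a clipped-linear ``sharp sigmoid'' (implementable with two ReLU neurons and $O(1)$ parameters). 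Using the pointwise identity $\cE[\phi(Yf(\bX))\mid \bX]-\cE[\phi(YC^*(\bX))\mid \bX]=|2\eta(\bX)-1|\,(1- f(\bX)\,\mathrm{sign}(2\eta(\bX)-1))$ for $|f|\le 1$, I would observe that on $\{|2\eta-1|>\tau+2\veps\}$ one has $\tilde f\equiv C^*$, so only the margin set contributes:
\begin{equation*}
\cE_\phi(\tilde f, f^*_\phi) \;\le\; 2(\tau+2\veps)\,\Pr(|2\eta(\bX)-1|\le \tau+2\veps) \;\lesssim\; (\tau+2\veps)^{q+1}.
\end{equation*}
Taking $\tau\asymp \veps$ gives approximation error $\lesssim \veps^{q+1}$.

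Finally, I would plug these two bounds into the oracle inequality and balance. With $S_n\asymp \veps^{-d/\beta}\log n$, $L_n\asymp \log n$, $N_n\asymp \veps^{-d/\beta}$, and $B_n, F_n$ polynomial in $n$, the stochastic term is of order $(\veps^{-d/\beta}\log^3 n / n)^{(q+1)/(q+2)}$. Equating with $\veps^{q+1}$ yields $\veps\asymp (\log^3 n/n)^{\beta/(\beta(q+2)+d)}$, which produces the target rate $(\log^3 n/n)^{\beta(q+1)/(\beta(q+2)+d)}$.

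The main technical obstacle, in my view, is the oracle inequality step: verifying the Bernstein variance condition uniformly over the DNN class (which requires the uniform $L^\infty$ bound $F_n$, explaining its role in the architecture) and carefully translating the ReLU covering-number estimate into the fast-rate power $(q+1)/(q+2)$. The approximation side is comparatively routine, essentially bootstrapping the known $\cH^{\beta,r}$ approximation result for $\eta$ together with a constant-size clipping subnetwork; the only delicate point there is to ensure that composing $\rho_\tau$ with $2\tilde\eta-1$ keeps depth, width, and sparsity within $(L_n, N_n, S_n)$ and that $\|\tilde f\|_\infty\le 1\le F_n$.
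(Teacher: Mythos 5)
Your proposal is correct and follows essentially the same route as the paper: Zhang's calibration inequality, the Bernstein-type variance bound with exponent $q/(q+1)$ from condition (N), a localized oracle inequality driven by the sparse-DNN covering-number bound, and the same two-stage approximant (a Schmidt-Hieber-type DNN for $\eta$ composed with a two-ReLU clipped ramp) whose excess hinge risk is controlled by $\Pr(|2\eta-1|\lesssim \xi_n)$; the final balancing reproduces the paper's exponent exactly. The only pieces you leave implicit — the precise derivation of the variance bound (splitting on $\{|2\eta-1|\le t\}$ and optimizing $t$) and the side condition needed to pass from a tail bound to a bound in expectation — are handled in the paper by Lemma \ref{lem:hingevar} and Theorem \ref{thm:hingecon} and are routine.
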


 \cite{audibert2007fast} showed that when $\eta(\cdot)\in \cH^{\beta}([0,1]^d)$, the minimax lower bound of the excess risk is given by
 $$\inf_{\hat{f}_n}\sup_{\eta\in \cH^{\beta, r}}
 \E\left[\cE(\hat{f}_n, C^*)\right]\gtrsim n^{-\frac{\beta(q+1)}{\beta (q+2)+d}}.$$
Hence, the convergence rate (\ref{eq:rate_scp}) is minimax optimal up to a logarithmic factor.

\subsection{Case 3: Margin condition}

The convergence rate can be improved if we assume that the density of an input vector
is small around the decision boundary. Let $B_\epsilon^*=\{\bx: \text{dist}(\bx,D^*)\le \epsilon\}$,
where $D^*=\{\bx:\eta(x)=1/2\}$ and $\text{dist}(\bx,D^*)=\inf_{\bx'\in D^*} \|\bx-\bx'\|_2$, where $\|\cdot\|_2$ denotes the Euclidian norm.
We introduce the following condition on the probability measure $P_X$.

\begin{itemize}
    \item[(M)] There exist $C>0$, $\epsilon_0>0$, and $\gamma\in[1,\infty]$ such that for any $\epsilon\in(0, \epsilon_0]$, 
    $$\Pr\left(\{\bX: \text{dist}(\bX,D^*)\le \epsilon\} \right)\le C \epsilon^\gamma.$$
\end{itemize}

The condition (M) is considered by \cite{steinwart2008support}, where
the parameter $\gamma$ in (M) is called the \textit{margin exponent}. 
\cite{steinwart2008support} proves that the support vector machine with the Gaussian kernel achieves a fast convergence rate under the condition (M). The following theorem proves that a similar convergence rate can be achieved using the DNN classifier. 

\begin{theorem}
\label{th:main_mar}
Assume \textup{(N)} with the noise exponent $q\in[0,\infty]$, and \textup{(M)} with the margin exponent $\gamma\in[1,\infty]$. If the surrogate loss $\phi$ is the hinge loss, the classifier $\hat{f}_{\phi, n}^{\textup{DNN}}$ defined by (\ref{eq:fdnn}) with carefully selected $L_n,N_n,S_n, B_n$, and $F_n$
satisfies
\be
\label{eq:rate1_2}
\sup_{C^{\star}\in C^{\alpha,r,K,T}}
\E\left[\cE(\hat{f}^{\textup{DNN}}_{\phi, n}, C^*)\right]\lesssim  
\left(\frac{\log^3 n}{n}\right)^{\frac{\alpha (q+1)}{\alpha (q+2)+(d-1)(q+1)/\gamma}}. 
\ee
\end{theorem}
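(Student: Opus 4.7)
My plan is to mirror the proof of Theorem~\ref{th:main_sb} and to track the single place where the additional assumption (M) sharpens the argument. Using the usual decomposition for the empirical risk minimizer $\hat f=\hat f^{\textup{DNN}}_{\phi,n}$,
\[
\cE_\phi(\hat f,f^*_\phi)\;\lesssim\;\inf_{f\in\cF_n}\cE_\phi(f,f^*_\phi)\;+\;(\text{estimation error}),
\]
together with the one-sided hinge comparison $\cE(\hat f,C^*)\le\cE_\phi(\hat f,f^*_\phi)$, the problem reduces to bounding the two right-hand terms. The architecture constants $L_n,N_n,S_n,B_n,F_n$ will be chosen exactly as in Theorem~\ref{th:main_sb}, with only $S_n$ (and accordingly $L_n,N_n$) re-tuned to reflect the sharper balance below.

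For the approximation step I would reuse the construction from Theorem~\ref{th:main_sb}: approximate each horizon function $g_k\in\cH^{\alpha,r}([0,1]^{d-1})$ describing the pieces of $C^*\in\cC^{\alpha,r,K,T}$ by a ReLU network $\hat g_k$ with $\|\hat g_k-g_k\|_\infty\le\delta_n$, which by the Yarotsky-type bound requires $\lesssim\delta_n^{-(d-1)/\alpha}$ nonzero parameters (up to logarithmic factors); then glue the $\hat g_k$ together with ReLU indicator and intersection circuits to obtain $\tilde f\in\cF_n$ that coincides with $C^*\in\{-1,1\}$ outside an $O(\delta_n)$-neighborhood of the decision boundary $D^*$. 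Since $|\tilde f|\le 1$ and the hinge loss is $1$-Lipschitz, the excess hinge risk is dominated by the $P_X$-measure of this neighborhood:
\[
\cE_\phi(\tilde f,f^*_\phi)\;\lesssim\;P_X\bigl(\{\bx:\mathrm{dist}(\bx,D^*)\le c\delta_n\}\bigr)\;\lesssim\;\delta_n^\gamma,
\]
where the final inequality is precisely assumption (M). This is the only point where $\gamma$ enters the derivation; in Theorem~\ref{th:main_sb} this same quantity was bounded by $\delta_n$ (the $\gamma=1$ case).

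For the estimation step I would use the same localized empirical process argument as in Theorem~\ref{th:main_sb}, driven by the Bernstein-type variance inequality
\[
\E\bigl[(\phi(Yf(\bX))-\phi(YC^*(\bX)))^2\bigr]\;\lesssim\;\cE_\phi(f,f^*_\phi)^{q/(q+1)},
\]
which for $[-1,1]$-clipped $f$ follows from the pointwise identity $\phi(Yf)-\phi(YC^*)=Y(C^*-f)$ together with the standard noise bound $P_X(\mathrm{sign}(f)\neq C^*)\lesssim\cE(\mathrm{sign}(f),C^*)^{q/(q+1)}$ derived from (N). Combining this with the covering-number estimate $\log\cN(\epsilon,\cF_n,\|\cdot\|_\infty)\lesssim S_n\log(S_nL_nB_n/\epsilon)$ gives an estimation error of order $(S_n\log^3 n/n)^{(q+1)/(q+2)}$. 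Balancing this against the approximation error $\delta_n^\gamma$ under the constraint $S_n\asymp\delta_n^{-(d-1)/\alpha}$ yields $\delta_n\asymp(\log^3 n/n)^{\alpha(q+1)/[\alpha\gamma(q+2)+(d-1)(q+1)]}$, so that $\delta_n^\gamma=(\log^3 n/n)^{\alpha(q+1)/[\alpha(q+2)+(d-1)(q+1)/\gamma]}$, which is exactly the exponent in (\ref{eq:rate1_2}).

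The main obstacle I anticipate is not the approximation side, where (M) enters essentially as a one-line substitution of $\delta_n$ by $\delta_n^\gamma$, but rather the interplay between (N) and (M) in the estimation step. Specifically, I would need to verify that the $(q+1)/(q+2)$ Bernstein exponent derived under (N) alone is still the sharp exponent once (M) is in force (so that the final bound really is $\delta_n^\gamma$ and not some smaller power of $\delta_n$ obtainable by using (M) again inside the localized argument), that clipping the DNN outputs to $[-1,1]$ in order to apply the variance bound can be absorbed into an additional ReLU layer without leaving $\cF_n$, and that the peeling / Talagrand step produces only the benign $\log^3 n$ factor stated in (\ref{eq:rate1_2}). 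Once these bookkeeping issues are settled, the argument proceeds essentially identically to that of Theorem~\ref{th:main_sb}.
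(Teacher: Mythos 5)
Your proposal is correct and follows essentially the same route as the paper's proof: approximate the piecewise-constant Bayes classifier by a ReLU network that coincides with it exactly outside a $\xi_n$-neighborhood of the decision boundary (the paper's Proposition~\ref{thm:approx}), invoke (M) once to bound the excess hinge risk of this approximant by $\xi_n^\gamma$, and then run the same $q/(q+1)$-variance bound, entropy estimate, and peeling argument as in Theorem~\ref{th:main_sb}; your balancing of $\xi_n^\gamma$ against $(S_n\log^3 n/n)^{(q+1)/(q+2)}$ reproduces the paper's exponent exactly. The bookkeeping issues you flag are resolved as you anticipate: the class is constrained to $F_n=1$ so no extra clipping layer is needed, (M) is not reused in the estimation step, and the $\log^3 n$ factor comes out of the generic Theorem~\ref{thm:gen}.
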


An interesting feature of the convergence rate (\ref{eq:rate1_2}) is that the dependency of
the input dimension $d$ diminishes as $\gamma$ increases. In the extreme case where $\gamma\rightarrow \infty$, the convergence rate becomes $n^{-(q+1)/(q+2)}$
up to the logarithm factor, which depends on neither the smoothness of the boundary nor the dimension of the input. This partly explains why the DNN classifier works well with high-dimensional inputs such as images.

To investigate the validity of the margin condition (M), we explore the area near the decision boundary obtained by the cat and dog images of the CIFAR10 dataset. We first fit the decision boundary using a convolutional neural network (CNN) with cat and dog images in the CIFAR10 dataset. We then randomly select two images, one from dog and the other from cat, and take convex combinations of them to obtain a sequence of images between the two selected images. Figure \ref{fig:interpol} shows five sequences of images from five randomly selected pairs of dog and cat images.
The images in the red box, which are the interpolated images with weights of the dog images 
ranging from $0.3$ to $0.7$, are visually unrealistic, which suggests
that the image classification has a large margin exponent.

\begin{figure}
    \centering
    \hspace{-6mm}\includegraphics[scale=0.25]{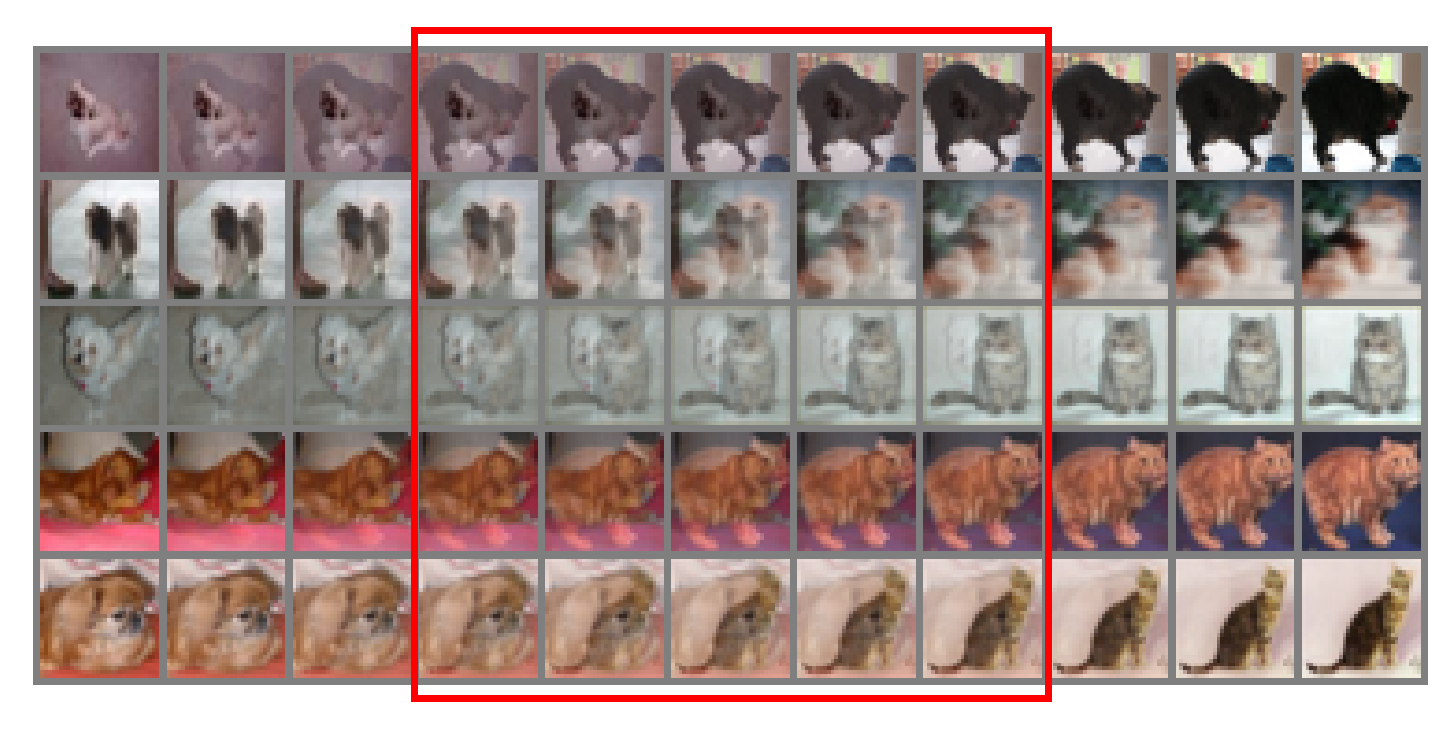}
    \caption{Interpolated images between cat and dog. The images in the red box seem to be unrealistic examples.}
    \label{fig:interpol}
\end{figure}

\subsection{Remarks regarding adpative estimation}

In practice, we know neither $q, \alpha,\beta$ nor $\gamma$, that affect the choice
of the DNN architecture parameters $L_n,N_n,S_n, B_n$, and $F_n$.
We may select them data-adaptively. General tool kits used to find an adaptive classifier have been developed by \cite{tsybakov2004optimal} and \cite{audibert2007fast}.
These tools can be applied to a DNN classifier with minor modification.

For example, the model selection approach with a data-split proposed by
\cite{audibert2007fast} can be applied without much hamper.
We first split the training data into two parts, $D_1$ and $D_2$, with the sample sizes
$n_1$ and $n_2$.
We then choose various values of $q,\alpha,\beta$, and $\gamma$, select the corresponding DNN architectures, and learn the architectures on data $D_1$. Finally,
among the learned DNN architectures, we choose the best DNN architecture
based on the data $D_2$. Because there is an algorithm of model selection where
the difference between the selected model and true model is $O_p(1/n)$
(for example, see \cite{juditsky2008learning} and \cite{audibert2007fast}), the selected model achieves the best possible convergence rate
$r_n^*$ as long as $n_1/n\rightarrow 1$ and $r_n^*/n_2 \rightarrow 0$.
We plan to report the detailed results of this soon.

\section{Use of cross-entropy}

The logistic loss does not estimate the classifier directly, and hence the convergence rate is sup-optimal in general. However, in practice, a DNN with the logistic loss (i.e., learned by minimizing the cross-entropy) works quite well.
In this section, we investigate when the logistic loss works well with a DNN.
We prove that the convergence rate of the excess risk of the DNN estimator with
the logistic loss can be fast when the true conditional class probabilities of most of  data are close to 1 or 0. This condition is expected to hold in most image recognition problems because human beings, who are thought to be a Bayes classifier, are very good at recognizing most images.
The formal statement of this condition is given as follows:
    \begin{itemize}
        \item[(E)]  For a given positive sequence $\{\tilde{F}_n\}_{n\in\mbN}$ with $\tilde{F}_n \to\infty$, there exists a positive sequence $\{\lambda_n\}_{n\in\mbN}$ with $\lambda_n\downarrow0$ such that
        $$\Pr\left\{\bX:|f_{\phi}^*(\bX)|> \tilde{F}_n\right\}\ge 1-\lambda_n.$$
    \end{itemize}


\begin{theorem}
\label{th:main_cr}
Assume \textup{(M)} with the margin exponent $\gamma\in[1,\infty]$. Let $\kappa=\alpha/(\alpha +(d-1)/\gamma)$. 
Assume \textup{(E)} with $\tilde{F}_n\asymp \kappa (\log n -3 \log (\log n))$ and $\lambda_n\asymp e^{-\tilde{F}_n}$. If $\phi$ is the logistic loss, then the classifier $\hat{f}_{\phi, n}^{\textup{DNN}}$ defined by (\ref{eq:fdnn}) with  $F_n=\tilde{F}_n$ and carefully selected $L_n,N_n,S_n$, and $B_n$ satisfies
\be
\label{eq:ratece}
\sup_{C^{\star}\in C^{\alpha,r,K,T}}
\E\left[\cE(\hat{f}^{\textup{DNN}}_{\phi, n}, C^*)\right]\lesssim  
 n^{-\kappa}\left(\log n\right)^{3\kappa+1}.
\ee
\end{theorem}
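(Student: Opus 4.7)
The plan is to bound $\cE(\hat f^{\textup{DNN}}_{\phi,n},C^{*})$ via the excess logistic risk $\cE_{\phi}(\hat f^{\textup{DNN}}_{\phi,n},f^{*}_{\phi})$, using a calibration inequality tailored to condition~\textup{(E)}. Because $f^{*}_{\phi}=\log(\eta/(1-\eta))$ is unbounded and in fact discontinuous across $D^{*}$, the key device is to compare functions in $\cF_{n}$ not to $f^{*}_{\phi}$ itself but to its truncation at level $\tilde F_{n}$; this is why \textup{(E)} is imposed and $F_{n}$ is chosen equal to $\tilde F_{n}$. As the first step I show that, uniformly over $f$ with $\|f\|_{\infty}\le\tilde F_{n}$,
$$
\cE(f,C^{*}) \;\lesssim\; \cE_{\phi}(f,f^{*}_{\phi}) + \lambda_{n}.
$$
The exceptional set $E_{n}:=\{|f^{*}_{\phi}|\le\tilde F_{n}\}$ contributes at most $P_{X}(E_{n})\le\lambda_{n}$ to the excess risk. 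On $E_{n}^{c}$, a short calculation using $\eta=\sigma(f^{*}_{\phi})$ and $|f^{*}_{\phi}|>\tilde F_{n}$ shows that whenever $\mathrm{sign}(f(\bx))\ne C^{*}(\bx)$ the pointwise excess logistic risk $\mathrm{KL}(\eta(\bx)\|\sigma(f(\bx)))$ is bounded below by $\log 2-O(e^{-\tilde F_{n}})$, giving the claimed linear calibration.

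For the approximant, I construct $\bar f_{n}\in\cF_{n}$ with $\|\bar f_{n}\|_{\infty}\le\tilde F_{n}$ and $\bar f_{n}(\bx)=\tilde F_{n}\,C^{*}(\bx)$ on $[0,1]^{d}\setminus B^{*}_{\epsilon_{n}}$ for a width $\epsilon_{n}\downarrow 0$ to be chosen. This uses exactly the DNN approximation to a piecewise constant function with smooth $\cH^{\alpha,r}$-boundaries that underlies Theorem~\ref{th:main_sb}, with $L_{n}\asymp\log(1/\epsilon_{n})$ and $S_{n}\asymp\epsilon_{n}^{-(d-1)/\alpha}$ up to log factors. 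Splitting $\cE_{\phi}(\bar f_{n},f^{*}_{\phi})$ into the three regions $E_{n}$, $B^{*}_{\epsilon_{n}}\setminus E_{n}$, and $E_{n}^{c}\setminus B^{*}_{\epsilon_{n}}$, and using that the pointwise excess logistic risk is bounded by $O(\tilde F_{n})$ on the first two regions and by $O(e^{-\tilde F_{n}})$ on the third, while the three $P_{X}$-masses are at most $\lambda_{n}$, $C\epsilon_{n}^{\gamma}$ (from \textup{(M)}), and $1$, I obtain
$$
\cE_{\phi}(\bar f_{n},f^{*}_{\phi}) \;\lesssim\; \tilde F_{n}\lambda_{n} + \tilde F_{n}\epsilon_{n}^{\gamma} + e^{-\tilde F_{n}}.
$$

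For the stochastic error I use that on $[-\tilde F_{n},\tilde F_{n}]$ the logistic loss is $1$-Lipschitz and bounded by $O(\tilde F_{n})$, together with covering-number bounds for $\cF_{n}$ and the Bernstein-type variance bound $\E[(\phi(Yf)-\phi(Yf^{*}_{\phi}))^{2}]\lesssim\tilde F_{n}\,\cE_{\phi}(f,f^{*}_{\phi})$ that follows from the Lipschitz property on the truncated range. A standard localization/peeling argument then gives
$$
\E\bigl[\cE_{\phi}(\hat f^{\textup{DNN}}_{\phi,n},f^{*}_{\phi})\bigr] \;\lesssim\; \cE_{\phi}(\bar f_{n},f^{*}_{\phi}) + \tfrac{\tilde F_{n}\,L_{n}S_{n}\log(B_{n}S_{n}n)}{n}.
$$
Plugging in $\tilde F_{n}\asymp\kappa(\log n-3\log\log n)$, $L_{n}\asymp\log n$, $S_{n}\asymp\epsilon_{n}^{-(d-1)/\alpha}$, and the optimal $\epsilon_{n}\asymp(n^{-1}\log^{3}n)^{\alpha/(\alpha\gamma+d-1)}$, each of $\tilde F_{n}\epsilon_{n}^{\gamma}$, $e^{-\tilde F_{n}}$, and the stochastic term is of order at most $n^{-\kappa}(\log n)^{3\kappa+1}$, while the truly dominant term $\tilde F_{n}\lambda_{n}\asymp\tilde F_{n}e^{-\tilde F_{n}}$ is exactly $n^{-\kappa}(\log n)^{3\kappa+1}$. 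The first step then transfers this bound to $\cE(\hat f^{\textup{DNN}}_{\phi,n},C^{*})$ at the cost of an additive $\lambda_{n}\lesssim n^{-\kappa}(\log n)^{3\kappa}$, which is absorbed.

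The main obstacle is the calibration step. The classical Zhang-type calibration for the logistic loss only yields $\cE(f,C^{*})\lesssim\sqrt{\cE_{\phi}(f,f^{*}_{\phi})}$, which is fatal for any fast rate. Condition~\textup{(E)} is exactly what rescues the argument: it confines the ``hard'' region where the square-root bound is unavoidable to a set of $P_{X}$-mass $\lambda_{n}$, and outside it a \emph{linear} calibration holds because $|f^{*}_{\phi}|>\tilde F_{n}$ forces every misclassification to pay a conditional excess logistic risk of at least $\log 2 - o(1)$. The delicate part is the simultaneous tuning of $\tilde F_{n}$, $\lambda_{n}$, $\epsilon_{n}$, and the architectural parameters: $\tilde F_{n}$ appears both as a Bernstein multiplier in the stochastic bound and as the multiplier in the truncation cost $\tilde F_{n}\lambda_{n}$, and it is this joint balance that dictates the specific logarithmic factor $(\log n)^{3\kappa+1}$ in the stated rate.
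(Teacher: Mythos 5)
Your proposal follows essentially the same route as the paper: approximate $\tilde F_n C^*$ by a DNN exactly on the complement of the margin strip $B^*_{\xi_n}$, use (M) to bound the resulting $L_1(P_X)$ approximation error by $\tilde F_n\xi_n^\gamma$, use (E) to show that both $\cE_\phi(f^*_\phi)$ and $\cE_\phi(\tilde F_n C^*)$ are of order $\tilde F_n e^{-\tilde F_n}$ (the paper's Lemma \ref{lem:cebound}), invoke the variance bound $\E[(\phi(Yf)-\phi(Yf^*_\phi))^2]\lesssim F\,\cE_\phi(f,f^*_\phi)$ (Lemma \ref{lem:logisticvar}), and balance everything via a peeling argument (Theorem \ref{thm:gen}); your tuning of $\tilde F_n$, $\xi_n$, and the architecture reproduces the paper's choices and the $(\log n)^{3\kappa+1}$ factor. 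The one substantive place where you go beyond the paper is the calibration step: the paper's Theorem \ref{thm:gen} only controls the excess \emph{logistic} risk, and the proof of Theorem \ref{th:main_cr} never says how this is transferred to the 0--1 excess risk, even though the generic Zhang-type inequality for the logistic loss only gives $\cE(f,C^*)\lesssim\sqrt{\cE_\phi(f,f^*_\phi)}$ and would destroy the fast rate. Your observation that (E) yields a \emph{linear} comparison $\cE(f,C^*)\lesssim\cE_\phi(f,f^*_\phi)+\lambda_n$ --- because on $\{|f^*_\phi|>\tilde F_n\}$ every misclassification incurs pointwise excess logistic risk bounded below by a constant, while the remaining set has mass $\lambda_n$ --- is exactly the missing bridge, and it is correct. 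One minor caveat: your parenthetical claim that the Bernstein variance bound ``follows from the Lipschitz property on the truncated range'' is not an adequate justification (Lipschitzness alone gives $\E|f-f^*_\phi|^2$, not $F\,\cE_\phi(f,f^*_\phi)$); this bound really rests on the curvature of the conditional logistic risk, as in Lemma \ref{lem:logisticvar}, and should simply be cited as such.
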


The convergence rate in Theorem \ref{th:main_cr} is equivalent to that in Theorem \ref{th:main_mar} for $q=\infty$ up to a logarithmic factor.

\begin{figure}
    \centering
    \includegraphics[scale=.9]{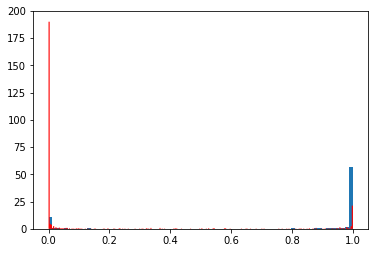}
    \caption{Histogram of the  conditional class probabilities estimated using a DNN with the logistic loss for CIFAR10 data. The blue bins are for the `dog' samples, and the red bins indicate  the `cat' samples.}
    \label{fig:hist}
\end{figure}

To investigate the validity of the condition (E), Figure \ref{fig:hist}
shows a histogram of the estimated conditional class probabilities of the test
data of the CIFAR10 data using the DNN classifier with the logistic loss. Note that most of
the conditional class probabilities are very close to either 1 or 0.

We compare the performance of the two DNN classifiers learned using the two surrogate losses - the logistic loss and the hinge loss. We analyze three benchmark datasets for image recognition, that is, MNIST, SVHN, and CIFAR10, where for each dataset we select two classes that are most difficult to recognize. The data descriptions and selected classes are summarized in Table \ref{tab:data}.
The detailed DNN architectures for the three datasets are given in Appendix \ref{sec:cnn}. The Adam is used for optimization with the learning rate $10^{-3}$.
Table \ref{tab:test} summarizes the test data error rates for various sizes of training data. The results are the averages (and standard errors) of 100 randomly selected training data, which amply show that the two estimators compete well with each other.

\begin{table}[]
\caption{Data summary}
    \centering
\begin{tabular}{lcccc}\hline
Data  & \# of training data & \# of test data & Input dimension & Selected classes \\\hline
MNIST & 60,000                  & 10,000              & $28\times28$           & `5' vs. `7'      \\
SVHN  & 73,257                  & 26,032              & $3\times32\times32$        & `4' vs. `9'      \\
CIFAR10 & 60,000                  & 50,000              & $3\times32\times32  $       & `cat' vs. `dog' \\ \hline
\end{tabular}
\label{tab:data}
\end{table}

\begin{table}[]
\caption{Test errors of the DNN classifiers learned using the hinge and logistic losses
with various training data sizes.}
\centering
\begin{tabular}{llcccc}\hline
\multirow{2}{*}{Data}    & \# of training & \multicolumn{2}{c}{Hinge loss} & \multicolumn{2}{c}{Logistic loss} \\
                         &  samples per each class                                         & Mean           & SE            & Mean            & SE              \\\hline
\multirow{3}{*}{MNIST}   
                         & 50                                      & 0.9318         & 0.0078        & 0.9359          & 0.0100          \\
                         & 500                                     & 0.9806         & 0.0031        & 0.9799          & 0.0024          \\
                         & 5000                                    & 0.9929         & 0.0006        & 0.9925          & 0.0005          \\\hline
\multirow{3}{*}{SVHN}    
                         & 50                                      & 0.7877         & 0.0698        & 0.7851          & 0.0798          \\
                         & 500                                     & 0.9500         & 0.0061        & 0.9545          & 0.0063         \\
                         & 5000                                    & 0.9796         & 0.0011        & 0.9801          & 0.0014          \\\hline
\multirow{3}{*}{CIFAR10} 
                         & 50                                      & 0.6628         & 0.0123        & 0.6698          & 0.0096          \\
                         & 500                                     & 0.7758         & 0.0090        & 0.7804          & 0.0081          \\
                         & 5000                                    & 0.8760         & 0.0064        & 0.8788          & 0.0047       
                         \\\hline
\end{tabular}
\label{tab:test}
\end{table}

\section{Concluding Remarks}

We showed that a DNN is very flexible in the sense that it achieves fast convergence rates
for various cases regarding a true model. It is interesting to note that a DNN is not only
good at estimating a smooth decision boundary but also a smooth conditional class probability.
In addition, a DNN can fully utilize the margin condition.  

We showed that using the cross-entropy is also
promising when the true conditional class probability is close to either 0 or 1 for most
data. However, we conjecture that learning a DNN by minimizing the cross-entropy would be sub-optimal when the conditional class probability is not extreme. 

Our theoretical results could be used to develop model selection procedures, particularly for the optimal selection of
$L_n$ and $N_n$. Moreover, it will be interesting to develop an online learning algorithm that
can select $L_n$ and $N_n$ data adaptively.

We did not consider a computational issue in this paper. Learning a DNN with a sparsity constraint has not been fully studied, although some methods have been proposed (e.g., \cite{liu2015sparse}, \cite{han2015learning}, and \cite{wen2016learning}).
A learning algorithm that supports our theoretical results will be worth pursuing.

\section*{Acknowledgement}

This work was supported by the Samsung Science and Technology Foundation under Project Number SSTF-BA1601-02.

\appendix
\section{Appendix}    

\subsection{Complexity measures of a class of functions}
We introduce the complexity measures of a given class of functions. 
Let $ \|\cdot\|_p$ for $1\le p<\infty$ be defined as $\|f\|_p=\left(\int_\cX|f(\bx)|^p\text{d}\mu(\bx)\right)^{1/p}$, where $\mu$ denotes the Lebesgue measure and $\|f\|_\infty=\sup_{\bx\in\cX}|f(\bx)|$.

Let $\cF$ be a given class of real-value functions defined on $\cX$. 
Let $\delta>0$ and $p\in[1,\infty]$. A collection $\{f_i\in\cF:i\in[N]\}$ is called a \textit{$\delta$-covering set} of $\cF$ with respect to the  $L_p$ norm if, for all $f\in\cF$, there exists $f_i$ in the collection such that $\|f-f_i\|_p\le \delta$. The cardinality of the minimal $\delta$-covering set is called the \textit{$\delta$-covering number} of $\cF$ with respect to the  $L_p$ norm, and is denoted by $\cN(\delta, \cF, \|\cdot\|_p)$, that is, 
$$\cN(\delta, \cF, \|\cdot\|_p)=\inf\left\{N\in\mbN:\exists f_1, \dots, f_N \mbox{ such that }
\cF\subset\bigcup_{i=1}^N B_p(f_i, \delta)\right\},$$
where $B_p(f_i, \delta)=\{f\in\cF:\|f-f_i\|_p\le \delta\}$.

A collection of pairs $\{(f_i^L, f_i^U)\in \cF\times\cF:i\in[N]\}$ is called a \textit{$\delta$-bracketing set} of $\cF$ with respect to the  $L_p$ norm if $\|f_i^U-f_i^L\|\le \delta$ for all $i\in [N]$, and for any $f\in \cF$, there is a pair $(f_i^L, f_i^U)$ in the collection such that $f_i^L\le f\le f_i^U$. The cardinality of the minimal $\delta$-bracketing set is called the \textit{$\delta$-bracketing number} of $\cF$ with respect to the $L_p$ norm, and is denoted by $\cN_B(\delta, \cF, \|\cdot\|_p)$. The \textit{$\delta$-bracketing entropy}, denoted by $H_B(\delta, \cF, \|\cdot\|_p)$ is the logarithm of the $\delta$-bracketing number, i.e., $H_B(\delta, \cF, \|\cdot\|_p)=\log \cN_B(\delta, \cF, \|\cdot\|_p)$.
 
For any $\delta>0$, it is known (see, for example, Lemma 2.1 of \cite{van2000empirical}) that 
$$\log\cN(\delta, \cF, \|\cdot\|_p)\le H_B(\delta, \cF, \|\cdot\|_p),$$
for any $p\in[1,\infty)$, and 
\be
\label{eq:entropy}
 H_B(\delta, \cF, \|\cdot\|_p)\le \log \cN(\delta/2, \cF, \|\cdot\|_\infty)
 \ee
if $\mu(\cX)=1$.

\subsection{Convergence rate of the excess $\phi$-risk for general surrogate losses}
\label{sec:gen}

In this subsection, we derive the convergence rate of the excess $\phi$-risk
under regularity conditions, which is used repeatedly in the following subsections.
The regularity conditions and techniques of the proof are minor modifications of those in \cite{park2009convergence}; however, we present the complete conditions and proof for the sake of readers' convenience.

We assume the following regularity conditions. 
 \begin{itemize}
    \item[(A1)] $\phi$ is Lipschitz, i.e., there exists a constant $C_1>0$ such that 
    $|\phi(z_1)-\phi(z_2)|\le C_1|z_1-z_2|$ for any $z_1, z_2\in \mbR$. 
    \item[(A2)] For a positive sequence $a_n=O(n^{-a_0})$ as $n\to\infty$ for some $a_0>0$, there exists a sequence of function classes $\{\cF_n\}_{n\in\mbN}$ such that
    $$\cE_\phi(f_n,f^*_{\phi})\le a_n$$
    for some  $f_n\in \cF_n$.
    \item[(A3)] 
    There exists a sequence $\{F_n\}_{n\in \mbN}$ with $F_n\gtrsim1$ such that $\sup_{f\in\cF_n}\|f\|_\infty\le F_n$.
    \item[(A4)] There exists a constant $\nu\in (0,1]$ such that for any $f\in \cF_n$ and any $n\in \mbN$,
   $$\E\left[\left\{\phi(Yf(\bX))-\phi(Yf^*_{\phi}(\bX))\right\}^2\right]
        \le C_2F_n^{2-\nu} \{\cE_\phi(f,f^*_{\phi})\}^\nu$$
    for a constant $C_2>0$ depending only on $\phi$ and $\eta(\cdot)$.
    \item[(A5)] For a positive  constant $C_3>0$, there exists a sequence  $\{\delta_n\}_{n\in\mbN}$ such that 
    $$H_B(\delta_n, \cF_n, \|\cdot\|_2)\le C_3n\left(\frac{\delta_n}{F_n}\right)^{2-\nu},$$
    for $\{\cF_n\}_{n\in\mbN}$ in (A2), $\{F_n\}_{n\in\mbN}$ in (A3), and $\nu$ in (A4).
\end{itemize}

For a proof of the general convergence result, we apply the large deviation inequality of \cite{shen1994convergence} presented in Lemma \ref{lem:empineq}. 

\begin{lemma}[Theorem 3 of \cite{shen1994convergence}]
\label{lem:empineq}
Let $\cF$ be the class of functions bounded above by $F$. Assume that $\E f(Z)=0$ for any $f\in\cF$ and $v\ge \sup_{f\in \cF}\textup{Var}(f(Z))$ for some $v>0$. 
Suppose that there exists $\zeta>0$ such that 
	\begin{enumerate}
	\item[(C1)] $ H_B(v^{1/2}, \cF, \|\cdot\|_2)\le \zeta nM^2/(8(4v+MF/3))$,
	\item[(C2)] $M\le  \zeta v/(4F)$, $v^{1/2}\le F$,
	\item[(C3)] if $\zeta M/8<v^{1/2}$, 
	$$M^{-1}\int_{\zeta M/32}^{v^{1/2}}H_B(u, \cF, \|\cdot\|_2)^{1/2}\textup{d} u\le \frac{n^{1/2} \zeta^{3/2}}{2^{10}}.$$
	\end{enumerate}
Then,
$${\Pr}^*\left(\sup_{f\in \cF}\frac{1}{n}\sum_{i=1}^n(f(Z_i)-\E f(Z_i))\ge  M\right)\le 3\exp\left\{-(1-\zeta)\frac{nM^2}{2(4v+MF/3)}\right\}$$
where ${\Pr}^*$ denotes the outer probability measure.
\end{lemma}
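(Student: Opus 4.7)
The plan is to prove this uniform Bernstein-type deviation bound by combining bracketing with a chaining (peeling) argument, applying Bernstein's inequality at each level, and using the union bound. This is the classical Shen--Wong strategy; I will sketch how the three conditions (C1)--(C3) each play their role.

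First I would set up a geometric ladder of bracketing scales. Put $\delta_0 = v^{1/2}$ and $\delta_k = 2^{-k}\delta_0$, continuing until the smallest scale $\delta_K$ is comparable to $\zeta M/32$ (the lower limit in (C3)). For each $k$, fix a minimal $\delta_k$-bracketing set $\{(f_i^{k,L}, f_i^{k,U})\}$ of $\cF$ with respect to $\|\cdot\|_2$, and for each $f\in\cF$ let $\pi_k f$ denote the upper bracket $f_i^{k,U}$ such that $f_i^{k,L}\le f\le f_i^{k,U}$. Then for any $f\in\cF$ one has the chaining decomposition
\begin{equation*}
f \;=\; \pi_0 f + \sum_{k=1}^{K}(\pi_k f - \pi_{k-1} f) + (f - \pi_K f),
\end{equation*}
and by monotonicity of brackets, $\|\pi_k f - \pi_{k-1} f\|_2 \le 3\delta_{k-1}$ and $\|f-\pi_K f\|_\infty$ is controlled by $F$ while $\|f-\pi_K f\|_2 \le \delta_K$. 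Allocate $M = M_0 + \sum_{k=1}^{K} M_k + M_*$, where $M_0$ is allotted to the coarsest level, each $M_k$ to the $k$-th chaining increment, and $M_*$ to the residual.

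Next, on the coarsest level the class $\{\pi_0 f: f\in\cF\}$ has cardinality at most $\cN_B(v^{1/2},\cF,\|\cdot\|_2)$, each $\pi_0 f$ has variance at most $v$ (plus an $O(v^{1/2})$ bias) and is bounded by $F$. A straight application of Bernstein's inequality to each of these finitely many functions yields
\begin{equation*}
\Pr\!\left(\textstyle\sup_{f} \tfrac{1}{n}\sum_{i=1}^n\{\pi_0 f(Z_i) - \mathbb{E}\pi_0 f\} \ge M_0\right) \le \cN_B(v^{1/2},\cF,\|\cdot\|_2)\cdot \exp\!\left(-\tfrac{nM_0^2}{2(4v + M_0 F/3)}\right).
\end{equation*}
Condition (C1) is tailored precisely so that when $M_0$ is chosen as a slight retraction of $M$ (using the factor $(1-\zeta)$), the logarithm of the bracketing number is absorbed into the exponent, leaving the target bound with exponent $-(1-\zeta)nM^2/(2(4v+MF/3))$. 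Condition (C2), namely $M \le \zeta v/(4F)$ and $v^{1/2}\le F$, is what forces Bernstein's inequality to operate in its ``sub-Gaussian'' regime where the $v$-term dominates $MF/3$, so that the allocation of variance bounds is consistent across levels.

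For the chaining levels $k\ge 1$, the increment class $\{\pi_k f - \pi_{k-1} f : f\in\cF\}$ has at most $\cN_B(\delta_{k-1},\cF,\|\cdot\|_2)\cdot \cN_B(\delta_k,\cF,\|\cdot\|_2)$ elements and $L_2$-radius $O(\delta_{k-1})$. Applying Bernstein again at level $k$ and taking $M_k \asymp \delta_{k-1}\sqrt{H_B(\delta_k,\cF,\|\cdot\|_2)/n}$ makes each level's tail comparable to $\exp(-cnM_k^2/\delta_{k-1}^2)$, and the $K$ tails are summable. The total allocated budget $\sum_k M_k$ is then dominated, up to the constant $2^{10}$, by the Dudley-type integral $M^{-1}\int_{\zeta M/32}^{v^{1/2}} H_B(u,\cF,\|\cdot\|_2)^{1/2}\mathrm{d}u$, and (C3) ensures this sum is at most $\zeta^{3/2}/2^{10}\cdot \sqrt{n}$, which is absorbed as part of the $\zeta$ retraction in the exponent. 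The residual term $f-\pi_K f$ contributes at most $\delta_K \lesssim \zeta M/32$ deterministically to the supremum, hence is covered by the remaining budget $M_*$.

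Finally, a union bound over the coarse-level event and the $K$ chaining-level events produces at most $1+K$ exponentials; telescoping and using (C1) one last time collapses this into the factor $3$ in front of $\exp\{-(1-\zeta)nM^2/(2(4v+MF/3))\}$. The main obstacle is bookkeeping: choosing the allocations $M_0, M_1,\dots, M_K, M_*$ so that (i) the sum equals $M$, (ii) each Bernstein application yields a summable tail matching the target exponent, and (iii) the constants produce exactly the stated factors $1/(8(4v+MF/3))$ in (C1), $\zeta/(4F)$ in (C2), and $\zeta^{3/2}/2^{10}$ in (C3) — these numerical constants have no conceptual content but have to be tracked precisely to recover the theorem as written.
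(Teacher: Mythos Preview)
The paper does not prove this lemma at all: it is stated verbatim as Theorem~3 of \cite{shen1994convergence} and invoked as a black box in the proof of Theorem~\ref{thm:gen}. So there is no ``paper's own proof'' to compare your sketch against.

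That said, your outline is a faithful high-level account of the Shen--Wong argument: geometric bracketing scales, chaining decomposition, Bernstein at each level, and a union bound, with (C1) absorbing the coarse-level cardinality, (C2) keeping Bernstein in its variance-dominated regime, and (C3) controlling the Dudley integral over the chaining increments. As a sketch it is sound; as a proof it is incomplete exactly where you yourself flag it --- the allocation of the $M_k$'s and the tracking of the explicit constants $8$, $4$, $2^{10}$, and the factor $3$ require the detailed bookkeeping in the original reference, and your write-up does not carry that out. If you intend this as a pointer to the literature rather than a self-contained proof, it serves that purpose; if you intend it as a replacement for the citation, you would need to fill in those numerical details.
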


The following Theorem is the main result of this section, which gives the convergence
rate of the excess $\phi$-risk.

\begin{theorem}
\label{thm:gen}
Suppose that the conditions \textup{(A1)-(A5)} are met. Let $\epsilon_n^2\asymp \max\{a_n, \delta_n\}$. 
Then, the empirical $\phi$-risk minimizer $\hat{f}_{\phi,n}$ over $\cF_n$ satisfies 
$$\Pr\left(\cE_\phi(\hat{f}_{\phi,n}, f_{\phi}^*)\ge\epsilon_n^2\right)\lesssim \exp(-Cn(\epsilon_n^2/F_n)^{2-\nu}),$$
for some universal constant $C>0$.
\end{theorem}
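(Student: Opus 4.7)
The plan is to combine the basic inequality for an ERM with a shell-by-shell peeling argument, controlling each shell's deviation via the Shen--Wong bound (Lemma~\ref{lem:empineq}). Set $g_f(z) = \phi(y f(\bx)) - \phi(y f_\phi^*(\bx))$ for $z = (\bx, y)$, so that $\cE_\phi(f, f_\phi^*) = \E g_f(Z)$, and fix $f_n \in \cF_n$ with $\cE_\phi(f_n, f_\phi^*) \leq a_n$ from (A2). Since $\hat f := \hat f_{\phi, n}$ minimizes the empirical $\phi$-risk over $\cF_n$, rearranging $\frac{1}{n}\sum_{i=1}^n g_{\hat f}(Z_i) \leq \frac{1}{n}\sum_{i=1}^n g_{f_n}(Z_i)$ yields the basic inequality
$$
\cE_\phi(\hat f, f_\phi^*) \leq a_n + \frac{1}{n}\sum_{i=1}^n\bigl(g_{f_n}(Z_i) - \E g_{f_n}(Z)\bigr) + \sup_{f \in \cF_n}\bigl\{\E g_f(Z) - \tfrac{1}{n}\textstyle\sum_{i=1}^n g_f(Z_i)\bigr\}_{+}.
$$
The fixed-$f_n$ scalar term is $O_p(\sqrt{a_n/n} + F_n/n)$ by Bernstein (using (A1), (A3), (A4)), hence absorbed into $\epsilon_n^2$ once $\epsilon_n^2 \asymp \max\{a_n, \delta_n\}$ is taken with a sufficiently large constant.

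Now peel: for $1 \leq j \leq J$ with $2^J \epsilon_n^2 \asymp F_n$, let $S_j = \{f \in \cF_n : 2^{j-1}\epsilon_n^2 \leq \cE_\phi(f, f_\phi^*) \leq 2^j \epsilon_n^2\}$. By a union bound it suffices to bound, for each such $j$, the probability that $\sup_{f \in S_j}\{\E g_f(Z) - \frac{1}{n}\sum_{i=1}^n g_f(Z_i)\} \geq c \cdot 2^j \epsilon_n^2$. Apply Lemma~\ref{lem:empineq} to the centered class $\{g_f - \E g_f : f \in S_j\}$ with $M = c \cdot 2^j \epsilon_n^2$, envelope $F \asymp F_n$ from (A1) and (A3), and variance cap $v_j = C_2 F_n^{2-\nu}(2^j \epsilon_n^2)^\nu$ from (A4). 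The Lipschitz property (A1) transfers the bracketing-entropy bound (A5) on $\cF_n$ to $\{g_f\}$ up to the factor $C_1$, and the key observation is that $v_j^{1/2} \geq \delta_n$ on every shell (since $\epsilon_n^2 \gtrsim \delta_n$ and $F_n \gtrsim 1 \geq \delta_n$); monotonicity of $H_B$ in its radius therefore collapses the entropy to $H_B(v_j^{1/2}) \leq H_B(\delta_n) \lesssim n(\delta_n/F_n)^{2-\nu}$. On the other hand $nM^2/(4v_j + MF/3) \asymp n(2^j \epsilon_n^2/F_n)^{2-\nu}$ in the variance-dominated regime $\nu \leq 1$ with $2^j\epsilon_n^2 \lesssim F_n$, and this strictly exceeds the entropy by construction of $\epsilon_n^2$, verifying (C1); conditions (C2) and (C3) are handled analogously, the integral in (C3) collapsing because the integrand is essentially constant above $\delta_n$. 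Lemma~\ref{lem:empineq} thus outputs a deviation probability of order $\exp\bigl(-c' n(2^j \epsilon_n^2/F_n)^{2-\nu}\bigr)$; summing over $j \geq 1$ yields a geometric series dominated by its $j=1$ term, which gives the stated tail bound.

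The main obstacle is verifying (C1)--(C3) uniformly over the shells. The entropy assumption (A5) is stated only at the single radius $\delta_n$; to make it usable across shells one has to exploit monotonicity of $H_B(\cdot)$, which in turn forces $v_j^{1/2} \geq \delta_n$ on every shell and pins down the scaling $\epsilon_n^2 \asymp \max\{a_n, \delta_n\}$. At the same time, the peeling constant $c$ and the Shen--Wong parameter $\zeta$ must be jointly chosen so that (C1) holds strictly with enough slack for a non-trivial tail, and the outermost finitely many shells (those with $2^j \epsilon_n^2 \gtrsim F_n$) require a separate direct argument using the envelope bound on $|g_f|$. Tracking how the Lipschitz constant $C_1$, the variance constant $C_2$, and the factor $F_n^{2-\nu}$ in (A4) propagate through (C1)--(C3) is the main bookkeeping burden.
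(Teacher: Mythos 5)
Your proposal is correct and follows essentially the same route as the paper: a basic ERM inequality anchored at the approximant $f_n$ from (A2), peeling into shells $2^{j-1}\epsilon_n^2\le\cE_\phi(f,f_\phi^*)<2^j\epsilon_n^2$, the variance bound (A4) to set $v_j\asymp F_n^{2-\nu}(2^j\epsilon_n^2)^\nu$, verification of (C1)--(C3) of the Shen--Wong lemma via monotonicity of the bracketing entropy and (A5), and a geometric sum dominated by the innermost shell. The only cosmetic difference is that the paper keeps $f_n$ inside the supremum process (so that $\cE_\phi(f_n,f_\phi^*)\le a_n\le\epsilon_n^2/2$ directly lower-bounds the mean on each shell by $2^{i-2}\epsilon_n^2$, making your separate Bernstein step for the fixed-$f_n$ term unnecessary) and observes that the outermost shells are simply empty by the envelope bound rather than needing a separate argument.
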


\begin{proof}
Let $C_1$, $C_2$, and $C_3$ be constants appearing in assumptions (A1), (A4), and (A5), respectively. Let $\epsilon_n^2=\max\{ 2a_n, 2^7\delta_n/C_1\}$. We define the following empirical process
$$\E_n(f)=\frac{1}{n}\sum_{i=1}^n\left[\phi(y_if_n(\bx_i))-\phi(y_if(\bx_i))
-\E\left\{\phi(Yf_n(\bX))-\phi(Yf(\bX))\right\}\right]$$
where $f_n\in \cF_n$ is a function such that $\cE_\phi(f_n,f^*_{\phi})\le a_n$. 

Since $\hat{f}_n$ minimizes $\cE_{\phi, n}(f)=\frac{1}{n}\sum_{i=1}^n\phi(y_if(\bx_i))$, it follows that
    \begin{eqnarray*}
    \Pr\left\{\cE_\phi(\hat{f}_n, f^*_{\phi})\ge \epsilon_n^{2}\right\}
    \le{\Pr}^*\left(\sup_{f\in\cF_n:\cE_\phi(f, f^*_{\phi})\ge \epsilon_n^{2}}\frac{1}{n}\sum_{i=1}^n\{\phi(y_if_n(\bx_i))-\phi(y_if(\bx_i))\}\ge 0\right). 
    \end{eqnarray*}
We define
$$\cF_{n,i}=\{f\in\cF_n:2^{i-1}\epsilon_n^2\le \cE_\phi(f, f^*_{\phi})< 2^{i}\epsilon_n^2\}.$$
Note that for $i\in\mbN$ such that $2^{i-1}\epsilon_n^2>2C_1F_n$, $\cF_{n,i}$ is an empty set. This is because for any $f\in\cF_n$, $\|f\|_\infty\le F_n$, and thus
$\cE_\phi(f, f^*_{\phi})\le \E|\phi(Yf(\bX))-\phi(Yf^*_{\phi}(\bX))|\le C_1\E|f(\bX)-f^*_{\phi}(\bX)|\le 2C_1F_n$. Therefore, $\{f\in\cF_n:\cE_\phi(f, f^*_{\phi})\ge \epsilon_n^{2}\}\subset\bigcup_{i=1}^{i^*_n}\cF_{n,i}$, where $i^*_n=\inf\{i\in\mbN:2^{i-1}\epsilon_n^2> 2C_1F_n\}$. Thus, we only deal with $\cF_{n,i}$ using $i\le i^*_n$. Because $\cE_\phi(f_n,f^*_{\phi})\le a_n\le \epsilon_n^2/2$, we have 
    \begin{eqnarray*}
   \inf_{f\in \cF_{n,i}}\E\{\phi(Yf(\bX))-\phi(Yf_n(\bX))\}
   =\inf_{f\in \cF_{n,i}}\{\cE_\phi(f,f^*_{\phi})-\cE_\phi(f_n,f^*_{\phi})\} 
   \ge 2^{i-2}\epsilon_n^2
    \end{eqnarray*}
We introduce the notation $M_{n,i}= 2^{i-2}\epsilon_n^2$ for a concise expression. Through the triangle inequality and (A4), we obtain the following variance bound
   \begin{equation}
   \label{eq:varbound}
   \begin{aligned}
  \sup_{f\in \cF_{n,i}}& \E\{\phi(Yf(\bX))-\phi(Yf_n(\bX))\}^2\\
   &\le C_2F_n^{2-\nu}\left(\sup_{f\in \cF_{n,i}} \cE_\phi(f, f^*_\phi)^{\nu}+ \cE_\phi(f_n, f^*_\phi)^{\nu}\right) \\
   &\le C_2 (1+4^\nu)F_n^{2-\nu}(2^{i-2}\epsilon_n^2)^\nu\\
   &=  C_2 (1+4^\nu) F_n^{2-\nu}M_{n,i}^\nu.
   \end{aligned}
   \end{equation}
Now, we have 
 \begin{equation}
 	\label{eq:peeling}
 \Pr\left\{\cE_\phi(\hat{f}_n, f^*_{\phi})\ge \epsilon_n^{2}\right\}\le \sum_{i=1}^{i_n^*}
 {\Pr}^*\left( \sup_{f\in \cF_{n,i}}\E_n(f)\ge M_{n,i}\right).
 \end{equation}
To bound the right-hand side, we apply Lemma \ref{lem:empineq} to the class of functions
	$$\cH_{n,i}=\{(\bx, y)\mapsto\phi(yf_n(\bx))-\phi(yf(\bx)):f\in \cF_{n,i}\},$$
with $\zeta=1/2$,  $F=D_1F_n$, $M=M_{n,i}$, and $v=v_{n,i}=D_2F_
n^{2-\nu}M_{n,i}^\nu$ where we let
	$$D_1=\frac{1}{8(2C_1)^{1-\nu}}D_2, \quad D_2=\max\{ C_2 (1+4^\nu), 64(2C_1)^{2-\nu}\}.$$
Note that for any $h\in\cH_{n,i}$, $\|h\|_\infty\le C_1\|f_n-f\|_\infty\le 2C_1F_n$, and $\sup_{h\in \cH_{n,i}}\textup{Var}(h(\bX, Y))\le C_2 (1+4^\nu) F_n^{2-\nu}M_{n,i}^\nu$ by (\ref{eq:varbound}). Since $D_1\ge 2C_1$ and $D_2\ge C_2(1+4^\nu)$, $\sup_{h\in \cH_{n,i}}\|h\|_\infty\le D_1F_n$, and $\sup_{h\in \cH_{n,i}}\textup{Var}(h(\bX, Y))\le v_{n,i}$. Now we will check (C1)-(C3) of Lemma \ref{lem:empineq}. Because $M_{n,i}\le 2C_1F_n$ for any $i\le i_n^*$ and $D_2\ge 64(2C_1)^{2-\nu} $, 
 	\begin{align*}
 	\frac{v}{F^2}=\frac{v_{n,i}}{D_1^2F_n^2}
 	&=\frac{D_2F_n^{2-\nu}(2C_1F_n)^\nu}{D_1^2F_n^2}\\
 	&\le \frac{D_2(2C_1)^{\nu}}{D_1^2}=\frac{64(2C_1)^{2-2\nu}(2C_1)^{\nu}}{D_2} 	\le 1
 	\end{align*}
and
 	\begin{align*}
 	M_{n,i}=M_{n,i}^{1-\nu}M_{n,i}^\nu &\le (2C_1F_n)^{1-\nu}M_{n,i}\\
 	& \le\frac{8(2C_1)^{1-\nu}D_1F_n^{2-\nu} M_{n,i}^\nu}{8D_1F_n}= \frac{v_{n,i}}{8D_1F_n}.
 	\end{align*}
Therefore, (C2) in Lemma \ref{lem:empineq} holds. For (C3), we first note that
    \begin{eqnarray*}
    H_B(\delta, \cH_{n,i}, \|\cdot\|_2) 
    \le H_B(C_1\delta, \cF_{n,i}, \|\cdot\|_2)
    \le H_B(C_1\delta, \cF_{n}, \|\cdot\|_2),
    \end{eqnarray*}
where the first inequality follows from (A1), and the second inequality follows from $\cF_{n,i}\subset\cF_{n}$.
Because $\int_{\zeta M_{n,i}/32}^{v_{n,i}^{1/2}}H_B(u, \cF_n, \|\cdot\|_2)^{1/2}\textup{d} u/M_{n,i}$ is non-increasing in $i$,
    \begin{equation}
    \label{eq:entcalc}
    \begin{aligned}
   & M_{n,i}^{-1}\int_{M_{n,i}/64}^{v_{n,i}^{1/2}}H_B^{1/2}(u, \cH_{n,i}, \|\cdot\|_2)\textup{d} u \\
    &\le  M_{n,1}^{-1}\int_{ M_{n,1}/64}^{v_{n,1}^{1/2}} H_B^{1/2}(C_1u, \cF_n, \|\cdot\|_2)\textup{d} u \\
    &\le  M_{n,1}^{-1}v_{n,1}^{1/2}H_B^{1/2}(C_1 M_{n,1}/64, \cF_n, \|\cdot\|_2) \\
    &\le (D_2F_n^{2-\nu})^{1/2}M_{n,1}^{\nu/2-1}  H_B^{1/2}(C_1\epsilon_n^2/128, \cF_n, \|\cdot\|_2) \\
    &\le C_3^{1/2}(D_2F_n^{2-\nu})^{1/2}(\epsilon_n/2)^{\nu-2}( n^{1/2}C_1^{1-\nu/2}2^{-7+7\nu/2}\epsilon_n^{2-\nu}F_n^{\nu/2-1})\\
    &\le (2^{-5+5\nu/2} C_3^{1/2} C_1^{1-\nu/2}D_2^{1/2})n^{1/2},
    \end{aligned}
    \end{equation}
where the fourth inequality is due to (A5). By taking $ C_3^{1/2}=2^{-13/2-5\nu/2}C_1^{\nu/2-1}D_2^{-1/2}$, 
(C3) of Lemma \ref{lem:empineq} is satisfied.
Furthermore, (\ref{eq:entcalc}) implies that
	\begin{align*}
	H_B(v^{1/2}_{n,i}, \cH_{n,i}, \|\cdot\|_2)^{1/2}
	&\le \frac{M_{n,i}}{v^{1/2}_{n,i}-M_{n,i}/64}M_{n,i}^{-1}\int_{M_{n,i}/64}^{v_{n,i}^{1/2}}H_B^{1/2}(u, \cH_{n,i}, \|\cdot\|_2)\textup{d} u \\
	&\le \frac{M_{n,i}}{v^{1/2}_{n,i}-M_{n,i}/64}M_{n,1}^{-1}\int_{M_{n,1}/64}^{v_{n,1}^{1/2}}H_B^{1/2}(C_1u, \cF_{n}, \|\cdot\|_2)\textup{d} u \\
	&\le \frac{M_{n,i}}{v^{1/2}_{n,i}-M_{n,i}/64}n^{1/2}2^{-23/2}\\
	&\le \frac{8}{7}\frac{M_{n,i}}{v^{1/2}_{n,i}}n^{1/2}2^{-23/2}=\frac{1}{7\times2^{17/2}}\frac{M_{n,i}}{v^{1/2}_{n,i}}n^{1/2}
	\end{align*}
where the last inequality is due to that $v^{1/2}_{n,i}\ge M_{n,i}/ 8$. On the other hand, since $v_{n,i}/(8D_1F_n)\ge M_{n,i}$,
	\begin{align*}
	n\frac{M_{n,i}^2}{16(4v_{n,i}+M_{n,i}D_1F_n/3)}
	\ge n\frac{M_{n,i}^2}{(64+2/3)v_{n,i}}
	\end{align*}
which is larger than $\frac{1}{7^2\times2^{17}}\frac{M_{n,i}^2}{v_{n,i}}n.$ Hence (C1) of Lemma  \ref{lem:empineq} is met.

 Applying Lemma \ref{lem:empineq} to each $\cH_{n,i}$, (\ref{eq:peeling}) is further bounded as 
    \begin{eqnarray*}
    \Pr\left\{\cE_\phi(\hat{f}_n, f^*_{\phi})\ge \epsilon_n^{2}\right\}
    &\le& \sum_{i=1}^{i_n^*}3\exp\left(-\frac{nM_{n,i}^2}{4(4v_{n,i}+M_{n,i}F_n/3)}\right) \\
    &\le& \sum_{i=1}^{\infty}3\exp(-C_4 nM_{n,i}^{2}/v_{n,i}\}) \\
     &\le& \sum_{i=1}^{\infty}3\exp(-C_5 (2^{i})^{2-\nu}n(\epsilon_n^2/F_n)^{2-\nu}) \\
     &\le& C_6\exp(-C_5 n(\epsilon_n^2/F_n)^{2-\nu})
    \end{eqnarray*}
for certain positive constants $C_4, C_5$, and $C_6$, which leads to the desired result. 
\end{proof}

\subsection{Generic convergence rate for the hinge loss}

We derive the convergence rate of the excess risk of the hinge loss under the conditions (A2), (A3), and (A5). Note that (A1) holds with $C_1=1$ for the hinge loss.
 We adopt the following lemma for the variance bound (A4).

\begin{lemma}[Lemma 6.1 of \cite{steinwart2007fast}]
\label{lem:hingevar}
Assume \textup{(N)} with the noise exponent $q\in[0,\infty]$. Assume $\|f\|_\infty\le F$ for any $f\in \cF$. For the hinge loss $\phi$, we have that, for any $f\in \cF$,
\bean
&&\E\left[\left(\phi(Yf(\bX))-\phi(Yf^*_\phi(\bX))\right)^2\right]\\
&&\le C_{\eta, q}(F+1)^{(q+2)/(q+1)}\left(\E\left[\phi(Yf(\bX))-\phi(Yf^*_\phi(\bX))\right]\right)^{q/q+1},
\eean
where $C_{\eta, q}=\left(\|(2\eta-1)^{-1}\|_{q,\infty}^q+1\right)\mathbf{1}(q>0)+1$ and $\|(2\eta-1)^{-1}\|_{q,\infty}^q$ is defined by 
$$\|(2\eta-1)^{-1}\|_{q,\infty}^{q}=\sup _{{t>0}}\left(t^q\Pr\left(\{\bX:|(2\eta(\bX)-1)^{-1}|>t\}\right)\right).$$
\end{lemma}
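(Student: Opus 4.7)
The plan is threefold: reduce the $L^2$ bound to a first-moment bound via Lipschitzness of the hinge loss; derive a pointwise conditional bound of the form $\E_Y[|L|\mid X]\le \E_Y[L\mid X]/|2\eta(X)-1|$; then peel on $\{|2\eta-1|\le t\}$ versus its complement and optimize in $t$ using the noise condition \textup{(N)}. Write $L(X,Y):=\phi(Yf(X))-\phi(Yf^*_{\phi}(X))$. Since $f^*_{\phi}(X)=\mathrm{sign}(2\eta(X)-1)\in\{-1,+1\}$ is the Bayes rule for the hinge loss and $\phi$ is $1$-Lipschitz, $|L|\le |f(X)-f^*_{\phi}(X)|\le F+1$ pointwise, giving the clean reduction
$$\E[L^2]\le (F+1)\,\E[|L|].$$

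The conditional pointwise inequality is obtained by a short case analysis on the location of $f(x)$ relative to $\{-1,+1\}$. By symmetry I restrict to $\eta(x)\ge 1/2$ so that $f^*_\phi(x)=1$. In the ``interior'' case $-1\le f(x)\le 1$ direct computation yields $\E_Y[L\mid X=x]=(2\eta-1)(1-f)$ and $\E_Y[|L|\mid X=x]=1-f$, so the ratio equals exactly $1/|2\eta-1|$. In the ``already correct'' case $f(x)\ge 1$ both conditional moments coincide, being $(1-\eta)(f-1)\ge 0$, and the ratio equals $1\le 1/|2\eta-1|$. In the ``large-margin misclassification'' case $f(x)\le -1$, one finds $\E_Y[L\mid X]=\eta(1-f)-2(1-\eta)\ge 2(2\eta-1)$ while $\E_Y[|L|\mid X]=\E_Y[L\mid X]+4(1-\eta)$, and the algebraic identity $1+2(1-\eta)/(2\eta-1)=1/(2\eta-1)$ then forces the same ratio bound. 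Combining all subcases yields
$$\E_Y[|L|\mid X]\le \frac{\E_Y[L\mid X]}{|2\eta(X)-1|}\quad\text{on }\{|2\eta-1|>0\},$$
while the uniform bound $\E_Y[|L|\mid X]\le F+1$ holds unconditionally.

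For the assembly, I split by a threshold $t>0$:
\begin{align*}
\E[|L|]
&\le (F+1)\,\Pr(|2\eta(X)-1|\le t)+\frac{1}{t}\,\E[L]\\
&\le (F+1)\,\bigl\|(2\eta-1)^{-1}\bigr\|_{q,\infty}^{q}\,t^{q}+\frac{\E[L]}{t},
\end{align*}
where the second line rewrites \textup{(N)} as a weak-$L^q$ estimate on $(2\eta-1)^{-1}$ (the two formulations are equivalent). Optimizing $t\asymp(\E[L]/((F+1)C_{\eta,q}))^{1/(q+1)}$ yields $\E[|L|]\lesssim C_{\eta,q}^{1/(q+1)}(F+1)^{1/(q+1)}(\E[L])^{q/(q+1)}$, and multiplying by $(F+1)$ delivers the stated bound, with $C_{\eta,q}^{1/(q+1)}\le C_{\eta,q}$ absorbing constants. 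The boundary case $q=0$ reduces to $\E[L^2]\le (F+1)^2$, which is trivial and consistent with $C_{\eta,0}=1$. The main obstacle I expect is the large-margin misclassification subcase $f\le -1$ with $\eta\ge 1/2$, where the discrepancy $\E_Y[|L|\mid X]-\E_Y[L\mid X]=4(1-\eta)$ does not vanish and must be absorbed into $\E_Y[L\mid X]$ through the precise combination of the lower bound $\E_Y[L\mid X]\ge 2(2\eta-1)$ with the identity above; one must also observe that the exceptional set $\{2\eta=1\}$, on which the pointwise ratio is vacuous, is $P_X$-null whenever $q>0$ by the weak-$L^q$ formulation, so it contributes nothing to $\E[|L|]$.
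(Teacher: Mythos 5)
Your proof is correct. Note, however, that the paper itself offers no proof of this lemma at all: it is imported verbatim as Lemma~6.1 of the cited reference (\cite{steinwart2007fast}), so there is nothing internal to compare against. What you have written is a sound, self-contained derivation along the standard route for this variance bound: (i) the reduction $\E[L^2]\le (F+1)\E[|L|]$ via $1$-Lipschitzness of the hinge loss and $|f-f^*_\phi|\le F+1$; (ii) the pointwise conditional inequality $\E_Y[|L|\mid X]\le \E_Y[L\mid X]/|2\eta(X)-1|$, whose three subcases I checked and which are all correct (in particular the delicate $f\le-1$ case, where $\E_Y[|L|\mid X]-\E_Y[L\mid X]=4(1-\eta)$ is absorbed using $\E_Y[L\mid X]\ge 2(2\eta-1)$ together with the identity $1+2(1-\eta)/(2\eta-1)=1/(2\eta-1)$); and (iii) the split at level $t$ plus optimization, using the equivalence of (N) with the weak-$L^q$ bound on $(2\eta-1)^{-1}$. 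Two very minor points: the optimization in $t$ produces a multiplicative factor (roughly $2$, uniformly in $q$) that you wave into $C_{\eta,q}$, so your constant matches the stated one only up to such an absolute factor rather than exactly; and the endpoint $q=\infty$ is not treated explicitly, though it follows by the same split with $|2\eta-1|$ bounded below $P_X$-a.s. Neither affects the validity of the argument or its use downstream in Theorem~\ref{thm:hingecon}, where only the form of the bound with $\nu=q/(q+1)$ matters.
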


\begin{theorem}
\label{thm:hingecon}
Let $\phi$ be the hinge loss. Assume \textup{(N)} with the noise exponent  $q\in[0,\infty]$, and that \textup{(A2), (A3)}, and \textup{(A5)} are met. Let $\epsilon_n^2\asymp \max\{a_n, \delta_n\}$. Assume that  $n^{1-\iota}(\epsilon_n^2/F_n)^{(q+2)/(q+1)}\gtrsim 1$ for an arbitrarily small constant $\iota>0$. Then, the empirical $\phi$-risk minimizer $\hat{f}_{\phi,n}$ over $\cF_n$ satisfies 
\be
\label{eq:hinge-excess}
\E\left[\cE(\hat{f}_{\phi,n}, C^*)\right]\lesssim \epsilon_n^2,
\ee
where the expectation is taken over the training data.
\end{theorem}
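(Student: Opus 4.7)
The plan is to obtain the conclusion by invoking Theorem \ref{thm:gen} applied to the hinge loss with the sharp variance exponent $\nu = q/(q+1)$ supplied by Lemma \ref{lem:hingevar}, then converting the resulting tail bound on the excess $\phi$-risk into a bound on the expected excess 0-1 risk. Every hypothesis needed for Theorem \ref{thm:gen} except (A4) is already assumed; the bulk of the work is to produce (A4) from the Tsybakov noise condition.

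First I would verify (A1) and (A4). The hinge loss is $1$-Lipschitz, so (A1) holds trivially with $C_1 = 1$. Applying Lemma \ref{lem:hingevar} with $F = F_n$ (so that $\sup_{f\in\cF_n}\|f\|_\infty\le F_n$ by (A3)) and $\nu = q/(q+1) \in (0,1]$ gives
\[
\E\bigl[(\phi(Yf(\bX))-\phi(Yf^*_\phi(\bX)))^2\bigr] \le C_{\eta,q}(F_n+1)^{2-\nu}\bigl(\cE_\phi(f,f^*_\phi)\bigr)^{\nu} \lesssim F_n^{2-\nu}\bigl(\cE_\phi(f,f^*_\phi)\bigr)^{\nu},
\]
where the final inequality uses $F_n\gtrsim 1$ from (A3). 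So (A4) is satisfied with this $\nu$, and $2-\nu = (q+2)/(q+1)$. (The boundary case $q=0$, for which Lemma \ref{lem:hingevar} degenerates to the trivial bound, is handled by choosing an arbitrarily small positive $\nu$ in combination with the uniform bound $\|\phi(Yf)-\phi(Yf^*_\phi)\|_\infty\le 2F_n$.) With (A1)--(A5) in hand, Theorem \ref{thm:gen} directly yields
\[
\Pr\bigl(\cE_\phi(\hat f_{\phi,n}, f^*_\phi) \ge \epsilon_n^2\bigr) \lesssim \exp\!\bigl(-Cn(\epsilon_n^2/F_n)^{(q+2)/(q+1)}\bigr)
\]
for a universal constant $C>0$.

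Next I would calibrate and integrate. For the hinge loss one has $f^*_\phi = C^*$ and the pointwise domination $\mathbf{1}(yf(\bx)<0)\le \phi(yf(\bx))$, from which the Zhang-type inequality $\cE(f,C^*)\le \cE_\phi(f,f^*_\phi)$ follows by subtracting the Bayes-optimal value; thus the tail bound above immediately transfers to the excess $0$--$1$ risk. Since that risk is bounded above by $1$, I split
\[
\E\bigl[\cE(\hat f_{\phi,n},C^*)\bigr] \le \epsilon_n^2 + \Pr\bigl(\cE(\hat f_{\phi,n},C^*) > \epsilon_n^2\bigr) \lesssim \epsilon_n^2 + \exp\!\bigl(-Cn(\epsilon_n^2/F_n)^{(q+2)/(q+1)}\bigr).
\]
The technical hypothesis $n^{1-\iota}(\epsilon_n^2/F_n)^{(q+2)/(q+1)}\gtrsim 1$ implies that the exponent is at least of order $n^{\iota}$, so the exponential term decays faster than any polynomial in $n$ and is therefore absorbed into $\epsilon_n^2$, which decays only polynomially. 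This yields \eqref{eq:hinge-excess}.

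The main obstacle is really the matching of constants and exponents rather than any deep new estimate: the rate $(q+2)/(q+1)$ in the tail and the technical assumption on $n^{1-\iota}(\epsilon_n^2/F_n)^{(q+2)/(q+1)}$ must line up precisely with the variance exponent delivered by Lemma \ref{lem:hingevar}. A secondary subtlety is the $q=0$ case, where Tsybakov's condition is vacuous and Lemma \ref{lem:hingevar} gives only a trivial variance bound; handling this case requires using a small positive $\nu$ and checking that the resulting exponent in the tail bound is still large enough (given the assumption on $\epsilon_n^2/F_n$) to make the exponential negligible compared to $\epsilon_n^2$.
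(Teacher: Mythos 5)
Your proposal follows the paper's proof essentially verbatim: verify (A4) with $\nu=q/(q+1)$ via Lemma \ref{lem:hingevar}, invoke Theorem \ref{thm:gen} for the tail bound on $\cE_\phi(\hat f_{\phi,n},f^*_\phi)$, transfer it to the excess $0$--$1$ risk by Zhang's inequality, and integrate using boundedness of the risk together with the hypothesis $n^{1-\iota}(\epsilon_n^2/F_n)^{(q+2)/(q+1)}\gtrsim 1$. One small caution: Zhang's inequality $\cE(f,C^*)\le\cE_\phi(f,f^*_\phi)$ does not follow merely from the pointwise domination $\mathbf{1}(yf<0)\le\phi(yf)$ plus ``subtracting the Bayes-optimal value'' (for the hinge loss $\cE_\phi(C^*)=2\,\cE(C^*)$, so that subtraction yields a weaker inequality in the wrong direction); the paper simply cites it as Theorem 2.31 of \citet{steinwart2008support}, which is the cleaner route.
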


\begin{proof}
By Zhang's inequality (Theorem 2.31 of \citep{steinwart2008support}), we have $\cE(\hat{f}_{\phi,n}, C^*)\le \cE_\phi(\hat{f}_{\phi,n}, f_\phi^*)$. Since (A4) is satisfied with $\nu=q/(q+1)$ by Lemma \ref{lem:hingevar}, Theorem \ref{thm:gen} implies that
	$$\Pr\left(\cE(\hat{f}_{\phi,n}, C^*)\ge\epsilon_n^2\right)\lesssim \exp(-Cn(\epsilon_n^2/F_n)^{(q+2)/(q+1)}),$$
for some universal constant $C>0$.
Since $\cE(\hat{f}_{\phi,n}, C^*)$ is bounded above by 1, the preceding display and the assumption $n^{1-\iota}(\epsilon_n^2/F_n)^{(q+2)/(q+1)}\gtrsim 1$ imply the desired result.
\end{proof}

\subsection{Entropy of the class of DNNs}

The following proposition states the upper bound of the $\delta$-entropy of a neural network function space.

\begin{proposition}[Lemma 3 of \cite{suzuki2018adaptivity}, Lemma 5 of \cite{schmidt2017nonparametric}]
\label{prop:entropy}
For any $\delta>0$,
\bean
&&\log \cN\left(\delta, \cF^{\textup{DNN}}(L,N,S, B, \infty), \|\cdot\|_\infty\right)\\
&&\le 2L(S+1)\log\left(\delta^{-1}(L+1)(N+1)(B\vee1)\right).
\eean
where $B\vee1=\max\{B, 1\}$.
\end{proposition}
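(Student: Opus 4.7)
The plan is to prove the covering number bound via a standard three-step recipe: establish Lipschitz continuity of the DNN output with respect to its weights, discretize the active weights on a uniform grid whose spacing is calibrated to the Lipschitz constant, and then enumerate sparsity patterns.

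Step 1 (Lipschitz continuity in weights). Fix an input $\bx\in[0,1]^d$ and two parameter sets $\Theta, \Theta'$ with $\|\Theta\|_\infty, \|\Theta'\|_\infty\le B$ and $\|\Theta-\Theta'\|_\infty\le \epsilon$. I would first bound $\|h^{(l)}(\cdot|\Theta)\|_\infty$ by induction: using $\|h^{(0)}\|_\infty\le 1$ and the recursion $\|h^{(l)}\|_\infty\le B+NB\|h^{(l-1)}\|_\infty$, one obtains $\|h^{(l)}\|_\infty\le ((N+1)(B\vee 1))^l$. Then, writing
\[
h^{(l)}(\Theta)-h^{(l)}(\Theta')=\sigma(W^{(l)}h^{(l-1)}(\Theta)+b^{(l)})-\sigma(W^{(l)'}h^{(l-1)}(\Theta')+b^{(l)'})
\]
and using that $\sigma$ is $1$-Lipschitz and the triangle inequality to split the argument into a weight-perturbation piece $(W^{(l)}-W^{(l)'})h^{(l-1)}(\Theta)+(b^{(l)}-b^{(l)'})$ and a previous-layer piece $W^{(l)'}(h^{(l-1)}(\Theta)-h^{(l-1)}(\Theta'))$, I would obtain by induction on $l$
\[
\|h^{(l)}(\cdot|\Theta)-h^{(l)}(\cdot|\Theta')\|_\infty \le \epsilon \cdot l \cdot ((N+1)(B\vee 1))^{l}.
\]
Applying this through the output layer gives $\|f(\cdot|\Theta)-f(\cdot|\Theta')\|_\infty\le \epsilon(L+1)((N+1)(B\vee 1))^{L+1}$.

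Step 2 (Discretization and sparsity enumeration). Set the grid spacing $\epsilon=\delta/[(L+1)((N+1)(B\vee 1))^{L+1}]$. For a fixed sparsity pattern $\cS$ of size at most $S$, the nonzero coordinates of $\Theta$ can be covered by a uniform $\epsilon$-grid in $[-B,B]^{|\cS|}$, yielding at most $(3B/\epsilon)^{S}$ configurations. The total number of parameter positions is at most $P=(L+1)[N(N+1)+N+1]\le 3L(N+1)^2$, so the number of sparsity patterns of size $\le S$ is bounded by $\binom{P}{S}\le (eP/S)^{S}\le P^{S+1}$ (the extra factor handling the sparsity index). Taking products and logarithms,
\begin{equations*}
\log\cN(\delta,\cF^{\textup{DNN}}(L,N,S,B,\infty),\|\cdot\|_\infty) &\le (S+1)\log P + S\log(3B/\epsilon) \\
&\le (S+1)\bigl[\log(3L(N+1)^2) + \log(3B(L+1)((N+1)(B\vee 1))^{L+1}/\delta)\bigr]
\end{equations*}
and a direct rearrangement absorbs the polynomial factors into the $2L$ exponent, giving the stated bound $2L(S+1)\log(\delta^{-1}(L+1)(N+1)(B\vee 1))$.

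The main obstacle I expect is the constant chasing in Step 2: the naive grouping of the factors $P$, $3B/\epsilon$, and $((N+1)(B\vee 1))^{L+1}$ produces a bound of the form $[(L+1)(N+1)(B\vee 1)/\delta]^{c\,L(S+1)}$ with a constant $c$ larger than $2$ unless one is careful. The trick is to use the assumption $B\vee 1\ge 1$ (so that $P\le((L+1)(N+1)(B\vee 1))^{2}$ trivially), absorb the $\log 3$ and $\log(L+1)$ terms into the dominant $L$-fold composition factor $\log(((N+1)(B\vee 1))^{L+1})\asymp L\log((N+1)(B\vee 1))$, and finally note that $S\le S+1$. This bookkeeping—together with the Lipschitz bound of Step 1—produces the coefficient $2L(S+1)$ in the exponent exactly as stated.
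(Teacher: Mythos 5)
The paper does not prove this proposition at all---it is imported verbatim as Lemma~5 of \cite{schmidt2017nonparametric} (and Lemma~3 of \cite{suzuki2018adaptivity})---and your argument is precisely the standard proof of that cited lemma: a Lipschitz-in-parameters bound obtained by peeling layers, a $\delta$-calibrated grid on the at most $S$ active weights, and enumeration of sparsity patterns, with the final constants absorbed using $B\vee 1\ge 1$ and $L+1\le 2L$. The induction in Step~1 and the counting in Step~2 are sound (modulo the harmless caveat, shared by the cited statement itself, that the single-logarithm form presupposes $\delta^{-1}(L+1)(N+1)(B\vee 1)\ge 1$), so your proposal correctly reconstructs the proof the paper delegates to the references.
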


\subsection{Proof of Theorem \ref{th:main_sb}}

The following proposition given by \cite{petersen2018optimal} proves that DNNs are good at approximating piecewise constant functions with smooth boundaries.

\begin{proposition}[Corollary 3.7 of \cite{petersen2018optimal}]
\label{thm:pieceapprox}
Let $d\ge2$, $\alpha, r>0, K\in\mbN$, and $T\in\mbN$. For any $C\in \cC^{\alpha, r, K, T}$ and any sufficiently small $\xi>0$, there exists a neural network
$$f(\cdot|\Theta)\in \cF^{\textup{DNN}}\left(L_0\log\left(1/\xi\right), N_0\xi^{-(d-1)/\alpha}, S_0\xi^{-(d-1)/\alpha}\log\left(1/\xi\right), B_0\xi^{-b_0}, 1\right),$$
where the positive constants $L_0, N_0, S_0, B_0$, and $b_0$ depend only on $d, \alpha, r$, and $K$, such that
$$\E\left|f(\bX|\Theta)-C(\bX)\right| \le\xi.$$
\end{proposition}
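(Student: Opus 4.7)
The plan is to build the approximating ReLU DNN hierarchically from the components of any $C\in\cC^{\alpha, r, K, T}$: H\"older-smooth boundary functions $g$, indicator thresholds composed with these, $K$-fold products to form each piece $A_t$, and a $T$-fold sum followed by the affine map $u\mapsto 2u-1$ to recover $C$.

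First I would approximate each smooth boundary. For each $g\in \cH^{\alpha,r}([0,1]^{d-1})$ defining a basis piece, apply the standard ReLU DNN approximation theorem for H\"older classes \cite{yarotsky2017error, schmidt2017nonparametric}: there exists $\hat g$ realized by a network of depth $\lesssim \log(1/\eta)$, width $\lesssim \eta^{-(d-1)/\alpha}$, sparsity $\lesssim \eta^{-(d-1)/\alpha}\log(1/\eta)$, and weights polynomially bounded in $\eta^{-1}$, satisfying $\|g-\hat g\|_\infty\le \eta$. To approximate the horizon function $\Psi_{g,j}(\bx)=\mathbf{1}(x_j\ge g(\bx_{-j}))$, I would compose $\hat g$ with a piecewise-linear ramp $\rho_\epsilon$, realized by $O(1)$ ReLU units and equal to $0$ for $z\le-\epsilon$ and $1$ for $z\ge 0$, forming $\hat\Psi_{g,j}(\bx)=\rho_\epsilon(x_j-\hat g(\bx_{-j}))$. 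Then $\hat\Psi_{g,j}$ agrees with $\Psi_{g,j}$ off the boundary layer $\{\bx:|x_j-g(\bx_{-j})|\le \eta+\epsilon\}$, whose $P_X$-measure is $O(\eta+\epsilon)$ under mild regularity of $P_X$ (bounded density suffices, and with respect to Lebesgue measure the bound follows from Fubini).

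Next I would assemble the pieces. Each $\mathbf{1}(\bx\in A_t)$ equals $\prod_{k=1}^K\Psi_{g_{t,k},j_{t,k}}(\bx)$, so I would realize the $K$-fold product via the standard ReLU implementation of multiplication (iterating the ReLU sawtooth approximation of $z\mapsto z^2$), giving a subnetwork of depth $O(\log K+\log(1/\mu))$ and width $O(K)$ that computes the product of $K$ numbers in $[0,1]$ to accuracy $\mu$. Feeding in the $\hat\Psi$s, summing over $t\in[T]$, applying $u\mapsto 2u-1$, and finally clipping to $[-1,1]$ with an $O(1)$-unit ReLU clipping block produces the final DNN, which is bounded in sup-norm by $1$. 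Setting $\eta\asymp \epsilon\asymp \mu \asymp \xi$, the total $L^1$ error satisfies $\lesssim TK(\eta+\epsilon)+T\mu\lesssim \xi$, while the aggregate architecture has depth $\lesssim \log(1/\xi)$, width $\lesssim \xi^{-(d-1)/\alpha}$, sparsity $\lesssim \xi^{-(d-1)/\alpha}\log(1/\xi)$, and weights $\lesssim \xi^{-b_0}$; all constants depend only on $d,\alpha,r,K,T$.

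The main obstacle is the approximation of the intrinsically discontinuous horizon functions: any ramp approximation introduces a boundary layer of width $O(\eta+\epsilon)$ on which $\hat\Psi_{g,j}$ and $\Psi_{g,j}$ differ, and controlling the measure of this layer uniformly over $g\in\cH^{\alpha,r}([0,1]^{d-1})$ is the crux of the argument --- in \cite{petersen2018optimal} this is handled by measuring the error in $L^p$ with respect to Lebesgue measure and exploiting both the Lipschitz structure of the ramp and a volumetric bound on the $\eta$-neighborhood of the graph of $g$. A secondary difficulty is preventing multiplicative error blow-up in the product network; because each input $\hat\Psi_{g_k,j_k}$ lies in $[0,1]$, a careful error-propagation argument, expanding $\prod_k(\Psi_k+\delta_k)-\prod_k\Psi_k$ to first order in the perturbations $\delta_k$, keeps the aggregated product error linear rather than exponential in $K$.
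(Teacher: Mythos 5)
The paper does not actually prove this proposition: it is imported verbatim as Corollary~3.7 of \cite{petersen2018optimal}, so there is no in-paper argument to match against. The closest thing is the paper's proof of its Proposition~\ref{thm:approx} (the sup-norm-off-a-boundary-strip variant, via Lemma~\ref{lemma:hor}), and your construction follows the same skeleton: approximate each boundary $g$ in sup norm by a ReLU network of width $\xi^{-(d-1)/\alpha}$, compose with a steep ramp to emulate the Heaviside step, and bound the error by the volume of the strip $\{|x_j-g(\bx_{-j})|\lesssim\xi\}$ via Fubini. The one genuinely different step is how you realize the $K$-fold conjunction: you build a multiplication network from the sawtooth approximation of $z\mapsto z^2$, which costs extra depth, introduces a new error parameter $\mu$, and forces the first-order error-propagation argument you describe. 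The paper (and Petersen--Voigtlaender) instead exploit that each approximate horizon function is \emph{exactly} $0$ or $1$ off the boundary strips, so a single thresholded linear layer $\bz\mapsto 2\sigma\bigl(\sum_{k=1}^K z_k-(K-1)\bigr)-1$ computes the AND exactly there; this removes $\mu$ entirely and makes your error-propagation concern moot. Your approach still works, it is just heavier than necessary. One point you flag is worth emphasizing: the conclusion as stated is an $L^1(P_X)$ bound, whereas the volumetric strip estimate is with respect to Lebesgue measure, so a bounded-density (or at least $P_X\ll\mu$ with bounded Radon--Nikodym derivative) assumption is genuinely needed; the paper's statement inherits this gap silently from translating Petersen--Voigtlaender's Lebesgue-measure result into an expectation under $P_X$, and your explicit acknowledgment of it is a point in your favor rather than a defect.
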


\begin{proof}[Proof of Theorem \ref{th:main_sb}]
We will check the conditions (A2), (A3), and (A5) in Section \ref{sec:gen}, and apply Theorem \ref{thm:hingecon} to complete the proof.
For (A2), let $\{\xi_n\}_{n\in\mbN}$ be a positive sequence such that $\xi_n\downarrow0$. Through Proposition \ref{thm:pieceapprox}, there exists $f_n\in\cF_n=\cF^{\textup{DNN}}(L_n, N_n, S_n, B_n, 1)$ such that  $\E|f_n(\bX)-C^*(\bX)| \le \xi_n$ with 
$L_n\lesssim \log(1/ \xi_n)$, $N_n\lesssim \xi_n^{-(d-1)/\alpha}$ and $S_n\lesssim  \xi_n^{-(d-1)/\alpha}\log(1/ \xi_n)$. Thus,
    \begin{eqnarray*}
        \cE_\phi(f_n, f_\phi^*)&=&  \E[\phi(Yf_n(\bX))-\phi(YC^*(\bX))] \\
        &\le& C_1\E|Yf_n(\bX)-YC^*(\bX)| \\
        &\le& C_1\E|f_n(\bX)-C^*(\bX)| \le C_1\xi_n,
    \end{eqnarray*}
 and hence (A2) and (A3) hold with $a_n=\xi_n$ and $F_n=1$.   
    
For (A5), let $\epsilon_n^2=C_1\xi_n$. Then, by Proposition \ref{prop:entropy}, 
    \begin{eqnarray*}
    &&\log \cN(\epsilon_n^2, \cF^{\textup{DNN}}(L_n, N_n, S_n, B_n, 1),\|\cdot\|_\infty) \\
    &&\le 2L_n(S_n+1)\log\left((\epsilon_n^2)^{-1}(L_n+1)(N_n+1)(B_n\vee1)\right) \\
    &&\lesssim \xi_n^{-(d-1)/\alpha}\log^2(\xi_n^{-1})\log(\epsilon_n^{-1}) \\
    &&\lesssim \epsilon_n^{-2(d-1)/\alpha}\log^3(\epsilon_n^{-1}).
    \end{eqnarray*}
In turn, (\ref{eq:entropy}) implies that
  (A5) is satisfied if we choose $\epsilon_n$ satisfying
$$(\epsilon_n^2)^{\frac{q+2}{q+1}+\frac{(d-1)}{\alpha}}\gtrsim n^{-1}\log^3(\epsilon_n^{-1}),$$
which leads to the best possible convergence rate
$$\epsilon_n^2 = 
\left(\frac{\log^3n}{n}\right)^{\frac{\alpha (q+1)}{\alpha (q+2)+(d-1)(q+1)}}$$
 and completes the proof by Theorem \ref{thm:hingecon}.
\end{proof}

\subsection{Proof of Theorem \ref{th:main_scp}}

We first introduce the smooth function approximation result of DNNs.

\begin{proposition}
\label{thm:smoothapprox}
For any function $f\in\cH^{\alpha, r}([0,1]^d)$ and any sufficiently small $\xi>0$, there exists a neural network 
$$f(\cdot|\Theta)\in \cF^{\textup{DNN}}\left(L_0\log\left(1/\xi\right), N_0\xi^{-d/\alpha}, S_0\xi^{-d/\alpha}\log\left(1/\xi\right), 1, F \right)$$
such that
$$\|f(\cdot|\Theta)-f \|_\infty\le \xi$$
where the constants $L_0,N_0,S_0$, and $F$  depend  only on $d,\alpha$ and $r$.
\end{proposition}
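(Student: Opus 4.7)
The plan is to follow the standard Yarotsky/Schmidt-Hieber construction adapted to the class $\cF^{\textup{DNN}}$ as defined in the paper. First, I would tile $[0,1]^d$ by a uniform grid $\{\bz_\nu\}$ with mesh width $1/M$, choosing $M \asymp \xi^{-1/\alpha}$ so that on each cell the degree-$[\alpha]^-$ Taylor polynomial $P_\nu$ of $f$ at $\bz_\nu$ satisfies $\|f - P_\nu\|_{\infty} \lesssim M^{-\alpha} \lesssim \xi$ by the Hölder condition $f \in \cH^{\alpha,r}$. Next, I would introduce a ReLU partition of unity $\{\varphi_\nu\}$ on this grid, where each $\varphi_\nu$ is a product of one-dimensional ``trapezoidal'' hat functions supported on a neighborhood of $\bz_\nu$. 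A single hat function $t \mapsto \sigma(1 - M|t - z|)$-type bump is realized exactly by a constant-size ReLU subnetwork with weights of absolute value at most $1$ (using scaled compositions if $M > 1$).

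The crucial ingredient is the Yarotsky multiplication gadget: the map $(x,y) \mapsto xy$ on $[0,1]^2$ can be approximated to accuracy $\xi$ by a ReLU network $\widetilde{\times}$ of depth and sparsity $O(\log(1/\xi))$ with weights bounded by $1$, obtained from the sawtooth approximation of $x \mapsto x^2$ via the identity $xy = \tfrac{1}{2}((x+y)^2 - x^2 - y^2)$. Iterating this gadget at most $[\alpha]^-$ times yields a subnetwork that approximates any monomial $\bx^{\bfm}$ with $|\bfm|\le[\alpha]^-$ to the same accuracy, and the same gadget delivers each product $\varphi_\nu(\bx) \cdot P_\nu(\bx)$ inside the overall sum $\widetilde f(\bx) = \sum_\nu \widetilde{\varphi_\nu \cdot P_\nu}(\bx)$.

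For parameter counting: there are $O(M^d) = O(\xi^{-d/\alpha})$ active cells; each contributes a constant number of monomial subnetworks, each costing $O(\log(1/\xi))$ sparse parameters and depth $O(\log(1/\xi))$. Running all cell-subnetworks in parallel and summing at the output layer gives depth $L_0 \log(1/\xi)$, width $N_0 \xi^{-d/\alpha}$, and sparsity $S_0 \xi^{-d/\alpha} \log(1/\xi)$, all consistent with the statement. The weight bound $B=1$ is preserved by rescaling constants: Taylor coefficients are bounded by $r$ and can be absorbed into a constant depth subnetwork that multiplies by a constant of magnitude $\le 1$ after distributing scale factors across layers. The output bound $F$ depends only on $\|f\|_\infty \le r$. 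Finally, a triangle-inequality argument bounds $\|\widetilde f - f\|_\infty$ by the sum of the Taylor remainder, the multiplication error, and the partition-of-unity error, each of order $\xi$.

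The main obstacle is the bookkeeping: enforcing simultaneously the weight bound $\|\Theta\|_\infty \le 1$, the sparsity budget $S_0 \xi^{-d/\alpha}\log(1/\xi)$, and the width budget $N_0 \xi^{-d/\alpha}$ without an extra polynomial factor in $\log(1/\xi)$. The weight bound in particular requires careful distribution of large scalings across multiple ReLU layers (using the fact that $\sigma$ is positively homogeneous), rather than concentrating them in a single layer. Once this scaling argument is in place, the result reduces to invoking the standard Yarotsky monomial/localization construction, which is by now well established in the literature cited (e.g., \cite{yarotsky2017error}, \cite{schmidt2017nonparametric}, \cite{petersen2018optimal}).
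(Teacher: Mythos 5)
Your proposal is correct and takes essentially the same route as the paper: the paper's proof simply invokes Theorem 5 of \cite{schmidt2017nonparametric} with $M\asymp\xi^{-d/\alpha}$ and $m\asymp\log(1/\xi)$, and that cited theorem is proved by exactly the local-Taylor-plus-partition-of-unity-plus-Yarotsky-multiplication construction (with the scaling distributed across layers to keep $\|\Theta\|_\infty\le 1$) that you sketch. The only difference is that you re-derive the black box the paper cites.
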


\begin{proof}
Theorem 5 of \cite{schmidt2017nonparametric} proves that 
for any $f\in\cH^{\alpha, r}([0,1]^d)$ and any integers $m\ge 1$ and $M\ge(\alpha+1)^d\vee (r+1)$, there exists 
 a neural network $f(\cdot|\Theta)\in \cF^{\text{DNN}}(L,N, S, 1,  \infty)$
 such that 
 $$\left\|f(\cdot|\Theta)-f(\cdot)\right\|_\infty\le 3^{d+1}2^{-m}(2r+1)M + 2^\alpha rM^{-\alpha/d},$$
where $L=8+(m+5)(1+ \lceil \log_2 d\rceil)$, $N= 12dM$, and $S=94d^2(\alpha+1)^{2d}M(m+6)(1+ \lceil \log_2 d\rceil)$.
By letting $M=( 2^{-(\alpha+1)}r^{-1}\xi^{-d/\alpha}$ and $m=\log_2\left((2r+1)6^{d+1}(2r)^{d/\alpha}\xi^{-d/\alpha-1}\right)$, we have
$L \lesssim L_0\log\left(1/\xi\right), N\lesssim N_0\xi^{-d/\alpha},
S\lesssim  S_0\xi^{-d/\alpha}\log\left(1/\xi\right)$, and
$\|f(\cdot|\Theta)-f \|_\infty\le \xi$. Finally, because
$\|f\|_\infty\le r$, we have $\|f(\cdot|\Theta)\|_\infty \le r+\epsilon$, and hence
we complete the proof with $F=r+\xi$.
\end{proof}

\begin{proof}[Proof of Theorem \ref{th:main_scp}]
For a given $\xi_n$, by Proposition \ref{thm:smoothapprox}, there exists $\tilde{\eta}_n$ such that  $\|\tilde{\eta}_n(\bx)-\eta(\bx)\|_\infty \le \xi_n$ with at most $C_1\log(1/ \xi_n)$ layers, $C_2\xi_n^{-d/\beta}$ nodes in each layer, and $C_3\xi_n^{-d/\beta}\log(1/ \xi_n)$ nonzero parameters for some positive constants $C_1, C_2$, and $C_3$. We construct the neural network $f_n$ by adding one layer to $\tilde{\eta}(\bx)$ to achieve
$$f_n(\bx)=2\left\{\sigma\left(\frac{1}{\xi_n}\left(\tilde{\eta}_n(\bx)-\frac{1}{2}\right)\right)
-\sigma\left(\frac{1}{\xi_n}\left(\tilde{\eta}_n(\bx)-\frac{1}{2}\right)-1\right)\right\}-1,
$$
where $\sigma$ denotes the ReLU activation function. 
Note that $f_n(\bx)$ is equal to $1$ if $\tilde{\eta}_n(\bx)\ge 1/2+\xi_n$, $(\tilde{\eta}_n(\bx)-1/2)/\xi_n$ if $1/2\le\tilde{\eta}_n(\bx)< 1/2+\xi_n$, and $-1$ otherwise. Let $A(4\xi_n)=\{\bx:|2\eta(\bx)-1|>4\xi_n\}$. Then, for $A(4\xi_n)$, $|f_n(\bx)-C^*(\bx)|=0$ because $\tilde{\eta}_n(\bx)-1/2=(\eta(\bx)-1/2)-(\tilde{\eta}_n(\bx)-\eta(\bx))\ge\xi_n$ when $2\eta(\bx)-1>4\xi_n$. Similarly, we can show that $\tilde{\eta}_n(\bx)-1/2<-\xi_n$ when $2\eta(\bx)-1<4\xi_n$. Therefore, by (N) we have 
   \begin{align*}
        \E[\phi(Yf_n(\bX))-\phi(YC^*(\bX))] 
       &= \int|f_n(\bx)-C^*(\bx)||2\eta(\bx)-1|\textup{d}P_X(\bx) \\
       &=  \int_{A(4\xi_n)^c}|f_n(\bx)-C^*(\bx)||2\eta(\bx)-1|\textup{d}P_X(\bx)\\
       &\le 8\xi_n\Pr(\{\bX:|2\eta(\bX)-1|\le 4\xi_n\})\asymp \xi_n^{q+1},
    \end{align*}
where the inequality in the last line holds since $\|f_n(\bx)\|_\infty\le1$.

Note that $f_n$ is also a DNN in which
the last layer of $f_n$ has a finite number of parameters, and the maximum of the parameters is bounded above by $\xi_n^{-1}$. Hence, we can construct the DNN class $\cF^{\textup{DNN}}(L_n, N_n, S_n, B_n, 1)$ containing $f_n$ with  
$L_n\lesssim \log(1/ \xi_n)$, $N_n\lesssim \xi_n^{-d/\beta}$, $S_n\lesssim  \xi_n^{-d/\beta}\log(1/ \xi_n)$, and $B_n\lesssim \xi_n^{-1}$.
Now, take $\epsilon_n^2\asymp \xi_n^{q+1}$ and observe that
 $$\log \cN(\epsilon_n^2, \cF^{\textup{DNN}}(L_n, N_n, S_n, B_n, 1),\|\cdot\|_\infty)\lesssim (\epsilon_n^2)^{-d/(q+1)\beta}\log^3(\epsilon_n^{-1})$$
through  Proposition \ref{prop:entropy}.
Since $H_B(\delta, \cF,\|\cdot\|_2)\le \log\cN(\delta/2, \cF,\|\cdot\|_\infty)$,
  (A5) is satisfied if we choose $\epsilon_n$ satisfying
$$ (\epsilon_n^2)^{\frac{q+2}{q+1}+\frac{d}{\beta}}\gtrsim n^{-1}\log^3(\epsilon_n^{-1}),$$
which leads to the best possible convergence rate
$$\epsilon_n^2 = 
\left(\frac{\log^3n}{n}\right)^{\frac{\beta (q+1)}{\beta (q+2)+d}}$$
 and completes the proof based on Theorem \ref{thm:hingecon}.
 \end{proof}

\subsection{Proof of Theorem \ref{th:main_mar}}

The main technique of the proof is to approximate a piecewise constant function using
a DNN with respect to the supremum norm on a specific subset of the domain, where this
subset depends on the function to be approximated. 

Let  $d\ge2$, $\alpha, r>0$, and $K\in\mbN$. Let $A_1,\dots, A_T\in\cA^{\alpha, r,K}$ 
be a disjoint with the form
$$A_t=\bigcap_{k=1}^K\left\{\bx\in[0,1]^d:x_{j_{(t,k)}}-g_{(t,k)}(\bx_{-j_{(t,k)}})\ge0\right\}.$$ 
Let $T\in\mbN$, and let
\be
\label{eq:cx}
C(\bx)=2\sum_{t=1}^T \mathbf{1}(\bx\in A_t)-1.
\ee
For a given $\xi>0$, define $B_{\xi}$ such that
\be
\label{eq:be}
B_{\xi}=\bigcap_{t=1}^T\left(A_t^c\cup\left\{\bx\in A_t:x_{j_{(t,k)}}-g_{(t,k)}(\bx_{-j_{(t,k)}})>\xi,\forall k\in[K]\right\}\right)
\ee
It turns out that any point in $B_{\xi}$ has the supremum norm from the
the decision boundary of $C(\bx)$ being larger than $\xi$.
The following theorem proves that a DNN approximates $C(\bx)$ well on $B_{\xi}$.

\begin{proposition}
\label{thm:approx}
Let $d\ge2$, $\alpha, r>0, K\in\mbN$, and $T\in\mbN$. For any $C\in \cC^{\alpha, r, K, T}$ and a sufficiently small $\xi>0$, there exists a neural network
$$f(\cdot|\Theta)\in \cF^{\textup{DNN}}\left(L_0\log\left(1/\xi\right), N_0\xi^{-(d-1)/\alpha}, S_0\xi^{-(d-1)/\alpha}\log\left(1/\xi\right), B_0\xi^{-b_0}, 1\right),$$
where the positive constants $L_0, N_0, S_0, B_0$, and $b_0$ depend  only on $d, \alpha, r, K$, and $T$, such that
$$\sup_{\bx \in B_{\xi}}
\left| f(\bx|\Theta)-C(\bx)\right|=0,$$
where $C(\bx)$ is the function defined in (\ref{eq:cx}), and
$B_{\xi}$ is defined in (\ref{eq:be}).
\end{proposition}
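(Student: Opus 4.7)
The plan is to construct $f(\cdot|\Theta)$ as a four-stage DNN: (i) smooth DNN approximators $\hat g_{(t,k)}$ of each horizon function $g_{(t,k)}$, (ii) a hard two-ReLU ramp that converts the computed margins $u_{(t,k)}(\bx) := x_{j_{(t,k)}} - \hat g_{(t,k)}(\bx_{-j_{(t,k)}})$ into $\{0,1\}$-valued gates on the appropriate regions, (iii) a ReLU-realized $\min$ that aggregates the $K$ gates of each piece $A_t$ into an indicator $p_t$, and (iv) a sum over $t$ followed by a clip to $[-1,+1]$. The crucial structural fact is an asymmetric reading of $B_\xi$: for every $\bx \in B_\xi$ and every $t \in [T]$, either $\bx \in A_t$ with all $K$ true margins $x_{j_{(t,k)}} - g_{(t,k)}(\bx_{-j_{(t,k)}})$ strictly above $\xi$, or $\bx \in A_t^c$ with at least one such margin strictly negative (magnitude unconstrained). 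Disjointness of $A_1,\dots,A_T$ then gives $\sum_t \mathbf{1}(\bx \in A_t) \in \{0,1\}$ on $B_\xi$, and this asymmetric margin is precisely what the min-aggregation needs.

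Explicitly, Proposition \ref{thm:smoothapprox} applied in dimension $d-1$ produces, for each $(t,k)$, a subnetwork $\hat g_{(t,k)}$ with $\|\hat g_{(t,k)} - g_{(t,k)}\|_\infty \le \xi/4$, depth $O(\log(1/\xi))$, width $O(\xi^{-(d-1)/\alpha})$, and sparsity $O(\xi^{-(d-1)/\alpha}\log(1/\xi))$. Running the $TK$ subnetworks in parallel and composing with the ramp
\[
\tilde h(u) = \sigma\!\bigl((2u-\xi/2)/\xi\bigr) - \sigma\!\bigl((2u-3\xi/2)/\xi\bigr),
\]
which equals $0$ on $u \le \xi/4$ and $1$ on $u \ge 3\xi/4$, the bound $\|\hat g_{(t,k)}-g_{(t,k)}\|_\infty \le \xi/4$ gives $\tilde h(u_{(t,k)}(\bx)) = 1$ for all $k$ whenever $\bx \in B_\xi \cap A_t$ and $\tilde h(u_{(t,k_0)}(\bx)) = 0$ for some $k_0$ whenever $\bx \in B_\xi \cap A_t^c$. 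Using the identity $\min(a,b) = a - \sigma(a-b)$ iteratively, I form $p_t(\bx) = \min_k \tilde h(u_{(t,k)}(\bx))$, so that $p_t(\bx) = \mathbf{1}(\bx \in A_t)$ on $B_\xi$. The final output
\[
f(\bx|\Theta) = 2\bigl(\sigma(s(\bx)) - \sigma(s(\bx)-1)\bigr) - 1, \quad s(\bx) = \sum_{t=1}^T p_t(\bx),
\]
is globally bounded by $1$ in absolute value and coincides with $C(\bx) = 2s(\bx)-1$ on $B_\xi$ because $s(\bx) \in \{0,1\}$ there.

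The architecture accounting is routine: the horizon-function block dominates depth, width, and sparsity; the ramp--min--sum--clip block adds only $O(K)$ depth and $O(TK)$ nodes, both absorbed into the constants $L_0,N_0,S_0$; the only weights growing with $\xi$ are the $O(1/\xi)$ factors in $\tilde h$, which fit within $B_0 \xi^{-b_0}$ for a suitable $b_0$ depending on $d,\alpha,r,K,T$. The main obstacle is conceptual rather than computational: one must resist the tempting but incorrect impulse to request a two-sided $\xi$-margin in the definition of $B_\xi$, and instead recognize that the one-sided inside-only margin is enough precisely because $\min$-aggregation is annihilated by a single zero, so the absence of any margin on the $A_t^c$ side does no harm.
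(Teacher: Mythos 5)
Your proposal is correct and follows essentially the same route as the paper: approximate each boundary function $g_{(t,k)}$ via Proposition \ref{thm:smoothapprox}, convert the margin to a $\{0,1\}$ gate with a steep two-ReLU ramp that is exact off a $\xi$-band (the paper's Lemma on horizon functions), exploit the one-sided margin in $B_\xi$ exactly as you describe, and aggregate. The only cosmetic difference is that the paper realizes the AND over the $K$ gates in one layer as $2\sigma\bigl(\sum_k z_k - (K-1)\bigr)-1$ rather than your iterated ReLU $\min$, which costs you $O(K)$ extra depth but changes nothing substantive.
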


\begin{proof}
The proof is deferred to Section \ref{sec:approxproof}.
\end{proof}

\begin{proof}[Proof of Theorem \ref{th:main_mar}]
Let $\{\xi_n\}_{n\in\mbN}$ be a positive sequence such that $\xi_n\downarrow0$.  Based on Theorem \ref{thm:approx}, there exists $f_n\in\cF_n=\cF^{\textup{DNN}}(L_n, N_n, S_n, B_n, 1)$ such that  
$$\sup_{\bx \in B_{\xi_n}}
\left| f(\bx|\Theta)-C(\bx)\right|=0$$
with $L_n\lesssim \log(1/ \xi_n)$, $N_n\lesssim \xi_n^{-(d-1)/\alpha}$, $S_n\lesssim  \xi_n^{-(d-1)/\alpha}\log(1/ \xi_n)$, and $B_n\lesssim \xi_n^{-b_0}$ for some $b_0>0$. 

We now show that $B_{\xi_n}^c\subset B_{\xi_n}^*$, where $B_{\xi_n}^*=\{\bx:\text{dist}(\bx, D^*)\le\xi_n\}$. Suppose that $\bx\in B_{\xi_n}^c$. Then, there are $t\in[T]$ and $k\in[K]$ such that 
$|x_{j_{(t,k)}}-g_{(t,k)}(\bx_{-j_{(t,k)}})|\le\xi_n$. Let $\bx^*$ be the $d$-dimensional vector where the $j_{(t,k)}$-th component is equal to $g_{(t,k)}(\bx_{-j_{(t,k)}})$ and the other components are the same as the corresponding components of $\bx$, i.e., $x^*_{j_{(t,k)}}=g_{(t,k)}(\bx_{-j_{(t,k)}})$ and $\bx^*_{-j_{(t,k)}}=\bx_{-j_{(t,k)}}$. Clearly, $\bx^*$ is on the decision boundary $D^*$. Since
$\|\bx-\bx^*\|_2=|x_{j_{(t,k)}}-g_{(t,k)}(\bx_{-j_{(t,k)}})|\le\xi_n$, it follows that $\text{dist}(\bx, D^*)\le\xi_n$, which implies that $f_n(\bx)-C^*(\bx)=0$ for any $\bx\in (B_{\xi_n}^*)^c$ since $(B_{\xi_n}^*)^c\subset B_{\xi_n}$. Therefore, through the condition (M), 
  	\begin{align*}
        \E[\phi(Yf_n(\bX))-\phi(YC^*(\bX))] 
       &= \int|f_n(\bx)-C^*(\bx)||2\eta(\bx)-1|\textup{d}P_X(\bx) \\
       &= \int_{B_{\xi_n}^*}|f_n(\bx)-C^*(\bx)||2\eta(\bx)-1|\textup{d}P_X(\bx)\\
       &\le C\xi_n^{\gamma}.
    \end{align*}
for some constant $C>0$, and hence (A2) and (A3) hold with $a_n=C \xi_n^\gamma$
and $F=1$.

For (A5), if we take $\epsilon_n^2=C\xi_n^{\gamma}$, it follows that
 $$\log \cN(\epsilon_n^2, \cF^{\textup{DNN}}(L_n, N_n, S_n, B_n, 1),\|\cdot\|_\infty)\lesssim (\epsilon_n^2)^{-(d-1)/\alpha\gamma}\log^3(\epsilon_n^{-1}).$$
Since $H_B(\delta, \cF,\|\cdot\|_2)\le \log\cN(\delta/2, \cF,\|\cdot\|_\infty)$,
  (A5) is satisfied if we choose $\epsilon_n$ satisfying
$$(\epsilon_n^2)^{\frac{q+2}{q+1}+\frac{d-1}{\alpha\gamma}}\gtrsim n^{-1}\log^3(\epsilon_n^{-1}),$$
which leads to the best possible convergence rate
$$\epsilon_n^2 = 
\left(\frac{\log^3n}{n}\right)^{\frac{\alpha(q+1)}{\alpha (q+2)+(d-1)(q+1)/\gamma}},$$
 and completes the proof by Theorem \ref{thm:hingecon}.
\end{proof}

\subsection{Proof of Theorem \ref{th:main_cr}}

For the logistic loss, the following two lemmas are needed. The first lemma states that the $\phi$-risks of both the $\phi$-risk minimizer and the Bayes classifier are bounded. The second lemma provides the variance bound of the logistic loss.

\begin{lemma}
\label{lem:cebound}
Let $\phi$ be the logistic loss. Assume \textup{(E)} with $\lambda_n\asymp e^{-\tilde{F}_n}$. There then exist constants $C_1>0$ and $C_2>0$ such that
\bean
\cE_\phi(f_{\phi}^*)&\le& C_1\tilde{F}_ne^{-\tilde{F}_n} \\
\cE_\phi(\bar{f}_n^*)&\le& C_2\tilde{F}_ne^{-\tilde{F}_n}
\eean
where $\bar{f}_n^*=\tilde{F}_nC^*$.
\end{lemma}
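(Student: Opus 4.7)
The plan is to evaluate both $\phi$-risks explicitly and then split the expectations according to the event $\{|f^*_\phi(\bX)| > \tilde{F}_n\}$ provided by condition (E).

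First I would treat $\cE_\phi(f^*_\phi)$. For the logistic loss we have the closed form $f^*_\phi(\bx) = \log(\eta(\bx)/(1-\eta(\bx)))$, and a direct computation gives
$$\cE_\phi(f^*_\phi) = \E\bigl[-\eta(\bX)\log\eta(\bX) - (1-\eta(\bX))\log(1-\eta(\bX))\bigr] = \E[H(\eta(\bX))],$$
where $H$ denotes the binary entropy. The key observation is that $|f^*_\phi(\bx)| > \tilde{F}_n$ forces $\min(\eta(\bx), 1-\eta(\bx)) < e^{-\tilde{F}_n}/(1+e^{-\tilde{F}_n}) \le e^{-\tilde{F}_n}$. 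Using the elementary inequality $H(p) \le -p\log p + p$ for $p \in (0,1/2]$ together with the monotonicity of $-p\log p$ on $(0, 1/e)$, we obtain $H(\eta(\bX)) \le 2\tilde{F}_n e^{-\tilde{F}_n}$ on the set $\{|f^*_\phi(\bX)| > \tilde{F}_n\}$ for $\tilde{F}_n \ge 1$. On the complement, which has $P_X$-measure at most $\lambda_n \asymp e^{-\tilde{F}_n}$, we trivially have $H(\eta) \le \log 2$. Combining the two pieces,
$$\cE_\phi(f^*_\phi) \le 2\tilde{F}_n e^{-\tilde{F}_n} + (\log 2)\lambda_n \lesssim \tilde{F}_n e^{-\tilde{F}_n},$$
which gives the first bound.

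Next I turn to $\cE_\phi(\bar{f}^*_n)$ with $\bar f^*_n = \tilde{F}_n C^*$. Since $C^*(\bx) \in \{-1,1\}$ agrees in sign with $2\eta(\bx)-1$, writing out the conditional expectation yields, for every $\bx$,
$$\E[\phi(Y\bar{f}^*_n(\bx)) \mid \bX = \bx] = \max(\eta(\bx), 1-\eta(\bx))\log(1+e^{-\tilde{F}_n}) + \min(\eta(\bx), 1-\eta(\bx))\log(1+e^{\tilde{F}_n}).$$
Using $\log(1+e^{-\tilde{F}_n}) \le e^{-\tilde{F}_n}$ and $\log(1+e^{\tilde{F}_n}) \le \tilde{F}_n + \log 2$, this is bounded by
$$e^{-\tilde{F}_n} + (\tilde{F}_n + \log 2)\min(\eta(\bx), 1-\eta(\bx)).$$
Therefore
$$\cE_\phi(\bar{f}^*_n) \le e^{-\tilde{F}_n} + (\tilde{F}_n + \log 2)\,\E[\min(\eta(\bX), 1-\eta(\bX))].$$
Applying the same dichotomy as before, $\E[\min(\eta, 1-\eta)] \le e^{-\tilde{F}_n} + \tfrac{1}{2}\lambda_n \lesssim e^{-\tilde{F}_n}$, and hence $\cE_\phi(\bar{f}^*_n) \lesssim \tilde{F}_n e^{-\tilde{F}_n}$, which is the second assertion.

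The proof is essentially computational, so there is no serious obstacle; the only care required is in using (E) to translate the tail condition on $|f^*_\phi|$ into a uniform bound on $\min(\eta, 1-\eta)$ and then combining it with the monotonicity of $p \mapsto -p\log p$ near $0$ to control the binary entropy on the favorable set. The assumption $\lambda_n \asymp e^{-\tilde{F}_n}$ is precisely what makes the contribution from the bad set $\{|f^*_\phi(\bX)| \le \tilde{F}_n\}$ compatible with the leading $\tilde{F}_n e^{-\tilde{F}_n}$ term.
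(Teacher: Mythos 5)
Your proposal is correct and follows essentially the same route as the paper: both split the expectation on the event $\{|f^*_\phi(\bX)|>\tilde{F}_n\}$ from (E), bound the pointwise conditional $\phi$-risk by a multiple of $\tilde{F}_ne^{-\tilde{F}_n}$ on that set, and use $\lambda_n\asymp e^{-\tilde{F}_n}$ to absorb the contribution of the complement. Your reformulation via the binary entropy $H(\eta)$ and $\min(\eta,1-\eta)$ is a cosmetic repackaging of the paper's direct computation with $e^{f^*_\phi}/(1+e^{f^*_\phi})$, and all the estimates check out.
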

\begin{proof}
Recall that $f_{\phi}^*(\bx)=\log(\eta(\bx)/(1-\eta(\bx)))$. We let
$$A_n=\{\bx:|f_{\phi}^*|> \tilde{F}_n\}$$.
It then follows that
    \begin{align*}
    \cE_\phi(f_{\phi}^*)
    &=\int \{\eta(\bx)\phi(f^*_\phi(\bx)) +  (1-\eta(\bx))\phi(-f^*_\phi(\bx))\} \textup{d}P_X(\bx) \nonumber\\
    &\le\phi(-\tilde{F}_n)\Pr(A_n)+
    \int_{A_n} \{\eta(\bx)\phi(f^*_\phi(\bx)) +  (1-\eta(\bx))\phi(-f^*_\phi(\bx))\} \textup{d}P_X(\bx) \nonumber\\
    &\le C_3\tilde{F}_n\lambda_n+
    \int_{A_n}\left[\frac{e^{f^*_\phi(\bx)}}{1+e^{f^*_\phi(\bx)}}\phi(f^*_\phi(\bx)) + \frac{1}{1+e^{f^*_\phi(\bx)}}\phi(-f^*_\phi(\bx)) \right]\textup{d}P_X(\bx). 
    \end{align*}
Let
$$G_n=\int_{A_n}\left[\frac{e^{f^*_\phi(\bx)}}{1+e^{f^*_\phi(\bx)}}\phi(f^*_\phi(\bx)) + \frac{1}{1+e^{f^*_\phi(\bx)}}\phi(-f^*_\phi(\bx)) \right]\textup{d}P_X(\bx). 
$$
We divide $A_n$ into two disjoint sets $\{\bx:f_{\phi}^*> \tilde{F}_n\}$ and $\{\bx:f_{\phi}^*<-\tilde{F}_n\}$. On $\{\bx:f_{\phi}^*>\tilde{F}_n\}$, we have \begin{eqnarray*}
 G_n&\le& \phi(\tilde{F}_n)+\sup_{z\ge \tilde{F}_n}\frac{\log(1+e^z)}{1+e^{z}} \\
    &\le&\log(1+e^{-\tilde{F}_n})+\frac{\log(1+e^{\tilde{F}_n})}{1+e^{\tilde{F}_n}} \\
    &\lesssim& \tilde{F}_ne^{-\tilde{F}}.
    \end{eqnarray*}
Similarly, we can show that $ G_n\lesssim \tilde{F}_ne^{-\tilde{F}_n}$ on $\{\bx:f_{\phi}^*<- \tilde{F}_n\}$, which 
implies $\cE_\phi(f_{\phi}^*)\lesssim \tilde{F}_n\lambda_n+\tilde{F}_ne^{-\tilde{F}_n}\asymp \tilde{F}_ne^{-\tilde{F}_n}$.

We use the similar argument above for $\bar{f}_n^*=\tilde{F}_nC^*(\bx)$. For $\{\bx:f_{\phi}^*> \tilde{F}_n\}$
    \begin{eqnarray*}
    \eta(\bx)\phi(\bar{f}_n(\bx)) +  (1-\eta(\bx))\phi(-\bar{f}_n(\bx))
    \le\log(1+e^{-\tilde{F}_n})+\frac{\log(1+e^{\tilde{F}_n})}{1+e^{\tilde{F}_n}} \lesssim \tilde{F}_ne^{-\tilde{F}_n},
    \end{eqnarray*}
and similarly we obtain the same upper bound on $\{\bx:f_{\phi}^*<- \tilde{F}_n\}$. 
\end{proof}

\begin{lemma}[Lemma 6.1. of \cite{park2009convergence}]
\label{lem:logisticvar}
Assume \textup{(N)} with the noise exponent $q\in[0,\infty]$. Assume $\|f\|_\infty\le F$ for any $f\in \cF$. Then, for the logistic loss $\phi$, we have that, for any $f\in \cF$,
 \bean
    \E\left[\left(\phi(f)-\phi(f^*_{\phi})\right)^2\right]
    \le C F\{\cE_\phi(f,f^*_{\phi})\}
  \eean
for some constant $C>0$.
\end{lemma}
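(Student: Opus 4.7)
The plan is to reduce the variance bound to a pointwise estimate on the conditional second moment versus the conditional excess risk, and then to establish the pointwise estimate by an explicit computation specific to the logistic loss. Conditioning on $\bX = \bx$, let $\eta = \eta(\bx)$ and $q = \sigma(f(\bx))$ where $\sigma$ denotes the sigmoid, and recall that $f^*_\phi(\bx) = \log(\eta/(1-\eta))$, so $\sigma(f^*_\phi(\bx)) = \eta$. A direct computation yields $\phi(f) - \phi(f^*_\phi) = \log(\eta/q)$ and $\phi(-f) - \phi(-f^*_\phi) = \log((1-\eta)/(1-q))$, so that the conditional second moment becomes
\begin{equation*}
V(\bx) = \eta \log^2(\eta/q) + (1-\eta) \log^2((1-\eta)/(1-q)),
\end{equation*}
while the conditional excess $\phi$-risk is the Kullback--Leibler divergence $\cE(\bx) = \mathrm{KL}(\eta \,\|\, q) \ge 0$. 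The lemma then reduces to the pointwise inequality $V(\bx) \le C F \cdot \cE(\bx)$, followed by integration against $P_X$.

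For the pointwise inequality, I would exploit that the assumption $\|f\|_\infty \le F$ forces $q, 1-q \in [c_0 e^{-F}, 1 - c_0 e^{-F}]$ with $c_0 = 1/(1+e^{-F}) \ge 1/2$, so the ``positive'' log-ratios satisfy $\log(1/q), \log(1/(1-q)) \le F + \log 2$. The proof then splits into two regimes. In the regime where $\eta$ and $q$ are comparable (say $\eta \in [q/2, 2q]$ and $1-\eta \in [(1-q)/2, 2(1-q)]$), a Taylor expansion of $\log$ gives $V(\bx) \asymp (\eta - q)^2/[q(1-q)]$ while a Pinsker-type bound gives $\mathrm{KL}(\eta \| q) \gtrsim (\eta - q)^2/[q(1-q)]$, so the ratio $V/\cE$ is bounded by an absolute constant, better than the stated bound. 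In the complementary ``far'' regime, without loss of generality taking $\eta \ge q$, I would use two elementary estimates: (i) $\eta \log^2(\eta/q) \le (F + \log 2) \cdot \eta \log(\eta/q)$, since $0 \le \log(\eta/q) \le F + \log 2$; and (ii) for the opposite-sign term, the universal bound $x \log^2(c/x) \le 4c/e^2$ for $0 < x \le c$ applied with $x = 1-\eta$ and $c = 1-q$ yields $(1-\eta) \log^2((1-q)/(1-\eta)) \le 4(1-q)/e^2 \le 4/e^2$, which is absorbed by the lower bound on $\mathrm{KL}(\eta\|q)$ available in this regime.

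The main obstacle will be making the case analysis uniform in $\eta \in [0,1]$, in particular ensuring that the ``far'' regime supplies a matching lower bound $\mathrm{KL}(\eta\|q) \gtrsim V(\bx)/F$ without exponential blowup when $\eta$ is close to $0$ or $1$; the universal bound $x \log^2(c/x) \le 4c/e^2$ is the key device that prevents the trivial estimate from producing an $F^2$ factor in the extremal regime. Integrating the pointwise bound against $P_X$ then completes the proof; note that the noise assumption (N) does not actually enter the argument, consistent with the fact that for strongly smooth surrogate losses such as the logistic, a variance bound with exponent $\nu = 1$ holds without any margin hypothesis.
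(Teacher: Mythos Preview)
The paper does not prove this lemma; it is quoted from an external source (Lemma~6.1 of Park~2009) and invoked as a black box in the proof of Theorem~\ref{th:main_cr}. So there is no in-paper argument to compare against, and I assess your sketch on its own merits.

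Your reduction to the pointwise inequality $V(\bx)\le CF\cdot\mathrm{KL}\bigl(\eta(\bx)\,\Vert\,q(\bx)\bigr)$ is correct, and your remark that condition~(N) plays no role is accurate: for the logistic loss the variance bound with exponent $\nu=1$ holds without any low-noise assumption.

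There is, however, a genuine gap in the far-regime step. You claim that the residual $4/e^2$ produced by $(1-\eta)\log^2\bigl((1-q)/(1-\eta)\bigr)\le 4(1-q)/e^2$ is ``absorbed by the lower bound on $\mathrm{KL}(\eta\Vert q)$ available in this regime,'' but no such uniform lower bound exists. Take $q=\varepsilon$ small (so $F\approx\log(1/\varepsilon)$) and $\eta=2\varepsilon$: then $\eta>2q$, placing you in your far regime, yet
\[
\mathrm{KL}(\eta\Vert q)=2\varepsilon\log 2+(1-2\varepsilon)\log\frac{1-2\varepsilon}{1-\varepsilon}=(2\log 2-1)\,\varepsilon+O(\varepsilon^2),
\]
which is of order $\varepsilon$. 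Your combined upper bound on $V$ in this example is roughly $(F+\log 2)\cdot 2\varepsilon\log 2 + 4/e^2$, and asking this to be at most $CF\cdot\mathrm{KL}\asymp CF\varepsilon$ forces $C\gtrsim (F\varepsilon)^{-1}$, which is not a universal constant. The underlying defect is that the two-regime split is too coarse: one can have $\eta/q$ far from $1$ while $(1-\eta)/(1-q)$ is close to $1$, and in that situation term~(ii) is actually of size $O\bigl((\eta-q)^2/(1-q)\bigr)$ and must be handled by a second-order estimate, not by the global maximum of $x\log^2(c/x)$. A workable repair is to treat the two summands separately according to whether the corresponding ratio is near $1$ or not, pairing each ``near'' second-order bound with Pinsker's inequality and reserving your estimate~(i) for the ``far'' summand; this does recover the $O(F)$ constant, but the bookkeeping is longer than your sketch indicates.
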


\begin{proof}[Proof of Theorem \ref{th:main_cr}]
Let $\bar{f}_n^*=F_nC^*$. 
As in the proof of Theorem \ref{th:main_mar}, for a positive sequence $\{\xi_n\}_{n\in\mbN}$ approaching zero, we can find  $f_n\in\cF_n=\cF^{\textup{DNN}}(L_n, N_n, S_n, B_n, F_n)$ such that
 $$\log \cN(\epsilon_n^2, \cF_n,\|\cdot\|_\infty)\lesssim (\xi_n)^{-(d-1)/\alpha}\log^3(\xi_n^{-1})$$
and
$$\|f_n-\bar{f}_n^*\|_{\infty, B_{\xi_n}}=0,$$
where $B_{\xi_n}$ is defined in (\ref{eq:be}), with $C$ being the Bayes classifier.
Because $B_{\xi_n}^c\subset B_{\xi_n}^*=\{\bx:\text{dist}(\bx, D^*)\le\xi_n\}$, the condition (M) implies that
    \bean
       \int|f_n(\bx)-\bar{f}_n^*(\bx)|\textup{d}P_X(\bx) 
       = \int_{B_{\xi_n}^*}|f_n(\bx)-\bar{f}_n^*(\bx)|\textup{d}P_X(\bx)
       \le C_1F_n\xi_n^{\gamma},
    \eean
for some constant $C_1>0$. By Lemma \ref{lem:cebound} and the Lipschitz property of the logistic loss, we have
    \begin{eqnarray*}
    \cE_\phi(f_n,  f_\phi^*)
    &\le &  \cE_\phi(f_n, \bar{f}_n^*) + \cE_\phi( f_\phi^*) +  \cE_\phi( \bar{f}_n^*) \\
    &\le &  C_2\int|f_n(\bx)-\bar{f}_n^*(\bx)|\textup{d}P_X(\bx)+ C_3F_ne^{-F_n} \\
     &\le &  C_1C_2F_n\xi_n^{\gamma}+C_3F_ne^{-F_n}
     \end{eqnarray*}
for some positive constants $C_2$ and $C_3$. Recall that we have defined 
$$\kappa=\frac{\alpha\gamma}{\alpha\gamma+d-1}.$$
We now take $F_n=\kappa (\log n-3\log(\log n))$ and $\xi_n^\gamma=n^{-\kappa}\log^{3\kappa}n$ such that $\cE_\phi(f_n,  f_\phi^*)\lesssim F_ne^{-F_n} \asymp n^{-\kappa}\log^{3\kappa+1} n$,
and thus the conditions (A2) and (A3) in Section \ref{sec:gen} hold with $a_n=n^{-\kappa}\log^{3\kappa+1} n$ and $F_n= \kappa (\log n-3\log(\log n))$.

For (A5), let $\epsilon_n^2=n^{-\kappa}\log^{3\kappa+1}n$. Because $\kappa(d-1)/\alpha\gamma=(d-1)/(\alpha\gamma+d-1)=1-\kappa$, it follows that
    \begin{eqnarray*}
    \log \cN(\epsilon_n^2, \cF_n,\|\cdot\|_\infty)
    &\lesssim & \left(\frac{\log^3n}{n}\right)^{-\kappa(d-1)/\alpha\gamma}\log^{3} n \\\
    &\lesssim & n^{1-\kappa}\log^{3\kappa}n \\
    &\asymp & nF_n^{-1}\epsilon_n^2
    \end{eqnarray*}
which implies (A5) and completes the proof through Theorem \ref{thm:gen}
with Lemma \ref{lem:logisticvar}, which proves the condition (A4).
\end{proof}

\subsection{Proof of Proposition \ref{thm:approx}}
\label{sec:approxproof}

\newcommand{\rbr}[1]{\left( {#1} \right)}
\newcommand{\cbr}[1]{\left\{ {#1} \right\}}
\newcommand{\sbr}[1]{\left[ {#1} \right]}
\newcommand{\norm}[1]{\left\| {#1} \right\|}
\newcommand{\abs}[1]{\left| {#1} \right|}

\def\hstack{\textsf{hstack}}
\def\vstack{\textsf{vstack}}
\def\diag{\textsf{diag}}

Before we provide the proof of Theorem \ref{thm:approx}, we introduce some useful definitions and techniques for the construction of DNNs, which are mostly from \cite{petersen2018optimal}.

 For matrices $\bW_1,\dots,\bW_N$, we let $\diag\rbr{\bW_1,\dots, \bW_N}$ denote a block diagonal matrix whose diagonal matrices are $\bW_1,\dots, \bW_N$. When $\bW_i$ have the same number of rows, we let $\hstack\rbr{\bW_1,\dots, \bW_N}$ denote a concatenated matrix along the column, and when $\bW_i$ have the same number of columns, we let $\vstack\rbr{\bW_1,\dots, \bW_N}$ denote a concatenated matrix along the row.


For an index set $D\subset [d]$, a \textit{masking neural network} with $L$ layers,  denoted by $f(\cdot|\Theta_{D, L})$, where $\Theta_{D, L}=\rbr{(\bW^{(l)}, \mathbf{0})}_{l=1,\dots, L+1}$,
in which $\bW^{(1)}=\vstack\rbr{\bI(D), -\bI(D)}\in \mbR^{2d\times d}$ and $\bW^{(1)}=\bI_{2d}\in \mbR^{2d\times 2d}$ for $l=2,\dots, L$, and $\bW^{(L+1)}=\hstack\rbr{\bI(D), -\bI(D)}\in \mbR^{d\times 2d}$, where $\bI_d$ denotes a $d\times d$ identity matrix and $\bI(D)$ is a diagonal matrix where the $i$-th diagonal entry is equal to 1 if $i\in D$, and is zero otherwise. If $L=1$, we define $\Theta_{D, 1}= \rbr{(\bI(D), \mathbf{0})}$. The output of the masking neural network is equal to the masked input $\bI(D)\bx$, of which the $j$-th element is equal to $x_j$ if $j\in D$, and is zero otherwise. Note that 
$\norm{\Theta_{D, L}}_0=2d(L-2)+4|D|\le 2dL$.

Let $\Theta_1=\rbr{(\bW_{1}^{(l)}, \bfb_{1}^{(l)})}_{l=1,\dots, L_1+1}$ and $\Theta_2=\rbr{(\bW_{2}^{(l)}, \bfb_{2}^{(l)})}_{l=1,\dots, L_2+1}$ be two neural networks such that the input layer of $\Theta_1$ has the same dimension as the output layer of $\Theta_2$. Then, a \textit{stacked neural network} of $\Theta_1$ and $\Theta_2$ denoted by $f(\cdot|\Theta_1\bullet\Theta_2)$, where $\Theta_1\bullet\Theta_2$ is defined by
	\begin{align*}
	\Theta_1\bullet\Theta_2=\big(&(\bW_{2}^{(1)}, \bfb_{2}^{(1)},), \dots, (\bW_{2}^{(L_2)}, \bfb_{2}^{(L_2)}), \\
    &(\bW_{1}^{(1)}\bW_{2}^{(L_2+1)}, \bW_{1}^{(1)}\bfb_{2}^{(L_2+1)}+\bfb_{1}^{(1)}), \\
    & (\bW_{1}^{(2)}, \bfb_{1}^{(2)}),     \dots, (\bW_{1}^{(L_1+1)}, \bfb_{1}^{(L_1+1)})\big).
	\end{align*}
The stacked neural network $\Theta_1\bullet\Theta_2$ has $L_1+L_2$ layers and satisfies
$$f(\bx|\Theta_1\bullet\Theta_2)=f(f(\bx|\Theta_2)|\Theta_1)$$
for any input $\bx\in \mbR^d$. In addition, we have that $\norm{\Theta_1\bullet\Theta_2}_0\le\norm{\Theta_{1}}_0 + \norm{\Theta_{2}}_0+2C$, where $C$ is the constant equal to the multiplication of the input dimension of $\Theta_1$ and the output dimension of $\Theta_2$.

Let $\Theta_1=\rbr{(\bW_{1}^{(l)}, \bfb_{1}^{(l)})}_{l=1,\dots, L+1}$ and $\Theta_2=\rbr{\bW_{2}^{(l)}, \bfb_{2}^{(l)})}_{l=1,\dots, L+1}$ be two neural networks with the same number of layers and $d$-dimensional inputs. A \textit{concatenated neural network} of the two networks $\Theta_1$ and $\Theta_2$ denoted by $f(\cdot|\Theta_1\oplus\Theta_2)$, where $\Theta_1\oplus\Theta_2$ is defined by
$\Theta_1\oplus\Theta_2=\rbr{\rbr{\bW_{1,2}^{(l)}, \bfb_{1,2}^{(l)}}}_{l=1,\dots, L+1}$,
where $\bfb_{1,2}^{(l)}=\vstack\rbr{\bfb_{1}^{(l)}, \bfb_{2}^{(l)}}$ for $l=1,\dots, L+1$, $\bW_{1,2}^{(l)}=\diag\rbr{\bW_{1}^{(l)}, \bW_{2}^{(l)}}$ for $l=2,\dots, L+1$, and $\bW_{1,2}^{(1)}=\vstack\rbr{\bW_{1}^{(l1)}, \bW_{2}^{(l)}}$.
The concatenated neural network satisfies 
$$f(\bx|\Theta_1\oplus\Theta_2)=\vstack\rbr{f(\bx|\Theta_1), f(\bx|\Theta_2)}$$
for any input $\bx\in \mbR^d$, as well as $\norm{\Theta_1\oplus\Theta_2}_0=\norm{\Theta_{1}}_0 + \norm{\Theta_{2}}_0$.

We are ready to prove Proposition \ref{thm:approx}. We divide the proof into two steps. First we give the proof of approximation of the horizon functions, and then using the result, we prove Proposition \ref{thm:approx}.

\begin{lemma}[Approximation of horizon functions]
\label{lemma:hor}
Let  $d\ge2$, $\alpha, r>0$, and $K\in\mbN$. For a horizon function $\Psi_{g,j}$, where $g\in \cH^{\alpha, r}([0,1]^{d-1}), j\in [d]$, and $\xi>0$, define
$$B_{\xi, g, j}=\cbr{\bx\in[0,1]^d:x_j-g(\bx_{-j})>\xi}\cup\cbr{\bx\in[0,1]^d:x_j-g(\bx_{-j})<0}.$$
There then exists a neural network
$$f(\cdot|\Theta)\in \cF^{\textup{DNN}}\left(L_0\log\left(1/\xi\right), N_0\xi^{-(d-1)/\alpha}, S_0\xi^{-(d-1)/\alpha}\log\left(1/\xi\right), B_0\xi^{-b_0}, 1\right),$$
where the positive constants $L_0, N_0, S_0, B_0$, and $b_0$  depend  only on $d, \alpha$, and $r$, such that
$$\left\|f(\cdot|\Theta)-\Psi_{g,j}\right\|_{\infty, B_{\xi, g, j}} =0.$$
\end{lemma}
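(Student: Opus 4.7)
The plan is to realize $f(\cdot|\Theta)$ as the composition of three pieces: a DNN approximation $\tilde g$ of the smooth function $g$, a parallel pass-through of the coordinate $x_j$, and a final trapezoidal ReLU threshold. The key observation is that $B_{\xi,g,j}$ excludes the thin strip $\{0\le x_j - g(\bx_{-j})\le \xi\}$, and this slack is exactly what allows a continuous ReLU network to reproduce the discontinuous horizon indicator on $B_{\xi,g,j}$ even though no continuous function can equal $\Psi_{g,j}$ on the entire unit cube.

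First I would apply Proposition \ref{thm:smoothapprox} on $[0,1]^{d-1}$ to obtain a DNN $\Theta_g$ realizing $\tilde g$ with $\|\tilde g - g\|_{\infty} \le \xi/3$; it has depth $\lesssim \log(1/\xi)$, width $\lesssim \xi^{-(d-1)/\alpha}$, sparsity $\lesssim \xi^{-(d-1)/\alpha}\log(1/\xi)$, and weights of magnitude $O(1)$. Next I would introduce the trapezoidal threshold
\[
\rho(t) = \tfrac{3}{\xi}\sigma(t-\xi/3) - \tfrac{3}{\xi}\sigma(t-2\xi/3),
\]
which equals $0$ for $t\le \xi/3$ and $1$ for $t\ge 2\xi/3$, i.e., a one-hidden-layer ReLU block with two nodes and weights of magnitude $O(1/\xi)$.

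I would then assemble the full network using the masking, stacking, and concatenation primitives recalled in Section \ref{sec:approxproof}: use a masking layer to extract $\bx_{-j}$, feed it into $\Theta_g$, and in parallel carry $x_j$ through an identity branch of the same depth as $\Theta_g$ (realized via $x_j = \sigma(x_j)-\sigma(-x_j)$ together with the masking construction, which keeps this branch $O(d)$-sparse per layer). A single linear layer then forms $x_j - \tilde g(\bx_{-j})$, and $\rho$ is appended on top. Correctness on $B_{\xi,g,j}$ is immediate: if $x_j - g(\bx_{-j})>\xi$ then $x_j - \tilde g(\bx_{-j}) > 2\xi/3$, so $\rho$ outputs $1 = \Psi_{g,j}(\bx)$; if $x_j - g(\bx_{-j}) < 0$ then $x_j - \tilde g(\bx_{-j}) < \xi/3$, so $\rho$ outputs $0 = \Psi_{g,j}(\bx)$. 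Also $|f(\cdot|\Theta)|\le 1$ on $[0,1]^d$ because $\rho\in[0,1]$.

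Finally I would do the complexity audit. The masking/identity branch contributes $O(d\log(1/\xi))$ parameters with $O(1)$ weights; the subtraction layer and $\rho$ contribute $O(1)$ additional nodes and layers with weights of magnitude $O(1/\xi)$. Adding these to the budget inherited from $\Theta_g$ yields depth $L_0\log(1/\xi)$, width $N_0\xi^{-(d-1)/\alpha}$, sparsity $S_0\xi^{-(d-1)/\alpha}\log(1/\xi)$, weight bound $B_0\xi^{-1}$, and output bound $1$, matching the claim with $b_0=1$. The main obstacle will not be conceptual but organizational: one must line up the depths, widths, and sparsity contributions across the stacking and concatenation operations so that $x_j$ and $\tilde g(\bx_{-j})$ arrive at the subtraction layer simultaneously without inflating the parameter count beyond $\xi^{-(d-1)/\alpha}\log(1/\xi)$. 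The masking primitive recalled just before the lemma is precisely the tool that keeps this bookkeeping within the stated budget.
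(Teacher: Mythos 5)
Your proposal is correct and follows essentially the same route as the paper's proof: approximate $g$ by a DNN via Proposition \ref{thm:smoothapprox}, carry $x_j$ through a parallel masking/identity branch, form the difference $x_j-\tilde g(\bx_{-j})$, and compose with a two-node ReLU trapezoid whose transition band, together with the approximation error of $\tilde g$, is absorbed by the excluded strip $\{0\le x_j-g(\bx_{-j})\le\xi\}$. The only differences are cosmetic (your $\xi/3$ budget split versus the paper's $\xi/4$ shift inside the map $\Phi$, and your explicit identification $b_0=1$, which the paper leaves generic).
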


\begin{proof}
Without a loss of generality, assume $j=1$. By  Proposition \ref{thm:smoothapprox}, we can construct a neural network $\tilde{g}=f(\cdot|\Theta_g)$ on $[0,1]^{d-1}$ such that $\|f(\cdot|\Theta_g)-g\|_{\infty}<\xi/4$ with $|\Theta_{g}|\lesssim \log(1/\xi)$, $N_{\max}(\Theta_{g})\lesssim \xi^{-(d-1)/\alpha}$,  $\|\Theta_{g}\|_0\lesssim \xi^{-(d-1)/\alpha}\log(1/\xi)$, $\|\Theta_g\|_\infty\le 1$, and $\|f(\cdot|\Theta_g)\|_{\infty}\le r$. 
Define the map $\Phi:\mbR^d\to\mbR^d$ by 
$$\Phi(\bx)=(x_1-g(\bx_{-1})-\xi/4, \bx_{-1}).$$
Let $D_1=\{1\}$ and $D_{-1}=[d]\setminus \{1\}$. Let $\Theta_{\pm}=\rbr{(1,-1)^\top, (-\xi/4,0)^\top}$. Then, consider the network 
$$\Theta_{\Phi}=\rbr{\Theta_{\pm}\bullet \rbr{\Theta_{D_1, L} \oplus \Theta_{g}}} \oplus \Theta_{D_{-1}, L},$$
where $\Theta_{D_1, L}$ and $\Theta_{D_{-1}, L}$ are masking neural networks and $L=|\Theta_{g}|$. Clearly, we have 
$$f(\bx|\Theta_{\Phi})=(x_1-\tilde{g}(\bx_{-1})-\xi/4, \bx_{-1}),$$
with $\|\Theta_{\Phi}\|_0\lesssim \xi^{-(d-1)/\alpha}\log(1/\xi)$. 

Let $H(\bx)=\mathbf{1}(x_1\ge0)$. We now
construct a neural network that approximates $H(\bx)$.
Let $\Theta_{H}=\rbr{(\bW^{(1)}, \bfb^{(1)}), (\bW^{(2)}, \bfb^{(2)})}$, where $\bW^{(1)}=(W_{i,j}^{(1)})_{i=1,2; j=1,\ldots,d}$ with $W_{i,1}^{(1)}=\xi^{-1}$ and $W_{i,j}^{(1)}=0$ for $j=2,\dots,d$, and for $i=1,2$, $\bfb^{(1)}=(0, -1)^\top$, $\bW^{(2)}=(1, -1)$ and $\bfb^{(2)}=0$. It can be shown that
 $|H(\bx)-f(\bx|\Theta_{H})|\le \mathbf{1}(0\le\bx_1\le\xi/2)$ for every $\bx$ with $\|\Theta_H\|_\infty\lesssim \xi^{-1}$.
See Lemma A.2 of \cite{petersen2018optimal} for details.
 
Finally, define 
$$\Theta=\Theta_{H}\bullet\Theta_{\Phi}$$
so that $f(\bx|\Theta)=f(f(\bx|\Theta_\Phi)|\Theta_H)$. We then have 
    \begin{equations*}
	&\norm{\Psi_{g,1}-f(\bx|\Theta)}_{\infty, B_{\xi, g,j}}\\
    &\le \norm{H(\Phi)- H(f(\cdot|\Theta_\Phi))} _{\infty, B_{\xi, g,j}}
    + \norm{H(f(\cdot|\Theta_\Phi))-f(f(\cdot|\Theta_\Phi)|\Theta_H)}_{\infty,  B_{\xi, g,j}} .
	\end{equations*} 
We show that both terms on the right-hand side of the preceding display are zero.	

For the first term, we note that
    \bean
	&&\abs{H(\Phi(\bx))- H(f(\cdot|\Theta_\Phi))} \\
    &&=|\mathbf{1}(x_1- g(\bx_{-1})-\xi/4\ge0)-\mathbf{1}(x_1- \tilde{g}(\bx_{-1})-\xi/4\ge0)| \\
     &&\le\mathbf{1}\{(x_1- g(\bx_{-1})-\xi/4)(x_1- \tilde{g}(\bx_{-1})-\xi/4)\le0\}.
	\eean
Note also that, on $B_{\xi, g,j}$, $|x_1-g(\bx_{-1})-\xi/4|>\xi/4$, and by the construction of $\tilde{g}$, $\abs{g(\bx_{-1})-\tilde{g}(\bx_{-1})}<\xi/4$ for any $\bx\in[0,1]^d$. Combining these two facts, we obtain 
	\begin{equations*}
	&(x_1- g(\bx_{-1})-\xi/4)(x_1- \tilde{g}(\bx_{-1})-\xi/4)\\
    &= \frac{1}{2}\sbr{ \rbr{x_1-g(\bx_{-1})-\xi/4}^2 + \rbr{x_1-\tilde{g}(\bx_{-1})-\xi/4}^2-\rbr{g(\bx_{-1})-\tilde{g}(\bx_{-1})}^2 } \\
    &>\frac{1}{2}  \rbr{x_1-\tilde{g}(\bx_{-1})-\xi/4}^2 \ge   0
	\end{equations*}   
which implies that the first term is equal to zero.

For the second term, we have that, for every $\bx\in B_{\xi, g,j}$
    \begin{equations*}
	\abs{H(f(\cdot|\Theta_\Phi))-f(f(\cdot|\Theta_\Phi)|\Theta_H)}
     \le \mathbf{1}(0\le x_1-\tilde{g}(\bx_{-1})-\xi/4\le\xi/2) =0
	\end{equations*}
where the second ineqaulity holds since $x_1-\tilde{g}(\bx_{-1})-\xi/4=x_1-g(\bx_{-1})+g(\bx_{-1})-\tilde{g}(\bx_{-1})-\xi/4<0$ if $x_1-g(\bx_{-1})<0$ and $x_1-g(\bx_{-1})+g(\bx_{-1})-\tilde{g}(\bx_{-1})-\xi/4>\xi/2$ if $x_1-g(\bx_{-1})>\xi$. Thus, $ \norm{H(\Phi)- H(f(\cdot|\Theta_\Phi))} _{\infty, B_{\xi, g,j}}=0$, which completes the proof.
\end{proof}

\begin{proof}[Proof of Proposition \ref{thm:approx}]
We give the proof only for the case of $T=1$. An extension of the cases $T\ge2$ is straightforward. Thus, we omit the subscript $t$ in all expressions. 

Let $f(\cdot|\Theta_k)$ be a neural network such that
$$|f(\bx|\Theta_k)-\mathbf{1}(x_{j_{k}}-g_{k}(\bx_{-j_{k}})\ge0)|=0$$
for any $\bx\in B_{\xi, k}=\cbr{\bx\in[0,1]^d:x_{j_k}-g_k(\bx_{-j_k})>\xi}\cup\cbr{\bx\in[0,1]^d:x_{j_k}-g_k(\bx_{-j_k})<0}$, as in Lemma \ref{lemma:hor}. 
Define the neural network $f(\mathbf{z}|\Theta_+)$ with $K$-dimensional inputs as
$$f(\mathbf{z}|\Theta_+)=2\sigma\rbr{\sum_{k=1}^Kz_k-(K-1)}-1,$$ where
 $\sigma$ denotes the ReLU activation function, and define 
$$\Theta=\Theta_+\bullet(\Theta_1 \oplus \cdots \oplus \Theta_K).$$
We now show that
$$\left\|\left\{2\mathbf{1}(\cdot\in A_t)-1\right\}-f(\cdot|\Theta)\right\|_{\infty, B_{\xi}}=0.$$
If $\bx\in A_t^c$, then there is $k^*$ such that $f(\bx|\Theta_{k^*})=0$. Hence, $\sum_{k=1}^K f(\bx|\Theta_k)\le K-1$ and thus $f(\bx|\Theta)=-1$. If  $\bx\in \left\{\bx\in A_t:x_{j_{k}}-g_{k}(\bx_{-j_{k}})>\xi,\forall k\in[K]\right\}$, then $f(\bx|\Theta_k)=1$ for all $k$, and hence $f(\bx|\Theta)=1$.
\end{proof}

\subsection{DNN architectures used for the experiments}
\label{sec:cnn}

For the MNIST dataset, we used a DNN with five hidden layers, whose numbers of nodes were 1200, 600, 300, 150, and 150, respectively. All hidden layers are followed by batch normalization \citep{ioffe2015batch}. In addition, for the SVHN and CIFAR10 datasets, we used the CNN models whose architectures are provided in Table \ref{tab:cnn}.

\begin{table}
\centering
\begin{tabular}{c|c}
\hline
SVHN&CIFAR10\\
\hline
\multicolumn{2}{c}{$32\times 32$ RGB images}\\
\hline
$3\times 3$ conv. 64 ReLU&$3\times 3$ conv. 96 ReLU\\
$3\times 3$ conv. 64 ReLU&$3\times 3$ conv. 96 ReLU\\
$3\times 3$ conv. 64 ReLU&$3\times 3$ conv. 96 ReLU\\
\hline
\multicolumn{2}{c}{$2\times 2$ max-pool, stride 2}\\
\multicolumn{2}{c}{dropout, $p=0.5$}\\
\hline
$3\times 3$ conv. 128 ReLU&$3\times 3$ conv. 192 ReLU\\
$3\times 3$ conv. 128 ReLU&$3\times 3$ conv. 192 ReLU\\
$3\times 3$ conv. 128 ReLU&$3\times 3$ conv. 192 ReLU\\
\hline
\multicolumn{2}{c}{$2\times 2$ max-pool, stride 2}\\
\multicolumn{2}{c}{dropout, $p=0.5$}\\
\hline
$3\times 3$ conv. 128 ReLU&$3\times 3$ conv. 192 ReLU\\
$1\times 1$ conv. 128 ReLU&$1\times 1$ conv. 192 ReLU\\
$1\times 1$ conv. 128 ReLU&$1\times 1$ conv. 192 ReLU\\
\hline
\multicolumn{2}{c}{global average pool, $6\times 6\to 1\times 1$}\\
\hline
FC $128 \to 1$& FC $192 \to 1$\\
\hline
\end{tabular}
\caption{CNN models used in our experiments over SVHN and CIFAR-10. All convolutional (conv.) and   fully connected (FC) layers are followed by the batch normalization.}
\label{tab:cnn}
\end{table}

\bibliographystyle{plainnat}
\bibliography{reference-deep}

\end{document}